\documentclass[11pt]{article}

\usepackage{amsmath, amssymb, amsthm, graphicx, color, geometry, setspace}
\usepackage[utf8]{inputenc}
\usepackage[T1]{fontenc}
\usepackage{lmodern}
\usepackage{booktabs, url, hyperref, tabularx}
\graphicspath{{figures/}}
\usepackage{tikz}
\usetikzlibrary{arrows.meta, positioning, shapes.geometric}
\usepackage{algorithm, algpseudocode}
\usepackage{makecell}
\newtheorem{theorem}{Theorem}[section]

\newtheorem{corollary}{Corollary}[section]

\title{Multi-Scale Dequant: Eliminating Dequantization Bottleneck via Activation Decomposition for Efficient LLM Inference}
\author{
Lingchao Zheng \and Yuwei Fan\thanks{corresponding author: fanyuwei2@huawei.com} \and Jun Li \and Chengqiu Hu \and
  Qichen Liao \and Junyi Fan \and Rui Shi \and Fangzheng Miao
}
\date{\today}

\begin{document}
\maketitle

\begin{abstract}
Quantization is essential for efficient large language model (LLM) inference, yet the dequantization step---converting low-bit weights back to high-precision for matrix multiplication---has become a critical bottleneck on modern AI accelerators. On architectures with decoupled compute units (e.g., Ascend NPUs), dequantization operations can consume more cycles than the matrix multiplication itself, leaving the high-throughput tensor cores underutilized.

This paper presents \textbf{Multi-Scale Dequant (MSD)}, a quantization framework that removes weight/KV dequantization from the GEMM critical path. Instead of lifting low-bit weights to BF16 precision, MSD decomposes high-precision BF16 activations into multiple low-precision components, each of which can be multiplied directly with quantized weights via native hardware-accelerated GEMM. This approach shifts the computational paradigm from \emph{precision conversion} to \emph{multi-scale approximation}, avoiding INT8-to-BF16 weight conversion before GEMM.

We instantiate MSD for two weight formats and derive tight error bounds for each. For INT8 weights (W8A16), two-pass INT8 decomposition achieves $\sim$16 effective bits with error bound $M/64516 \approx M/2^{16}$. For MXFP4 weights (W4A16), two-pass MXFP4 decomposition yields $\sim$6.6 effective bits with error bound $\alpha/64$ per block---surpassing single-pass MXFP8's 5.24 bits while maintaining the same effective GEMM compute time. We further derive closed-form latency and HBM traffic models showing that MSD avoids the Vector-Cube pipeline stall caused by dequantization and reduces KV cache HBM traffic by up to $2.5\times$ in attention. Numerical simulations on matrix multiplication and Flash Attention kernels confirm that MSD does not degrade accuracy compared to dequantization baselines, and in many settings achieves lower L2 error.
\end{abstract}
\section{Introduction}
\label{sec:intro}

Large language models (LLMs) have demonstrated remarkable capabilities across a wide range of tasks, but their deployment at scale presents significant computational challenges. The inference cost of state-of-the-art models is dominated by memory bandwidth and matrix multiplication operations, particularly during the decode phase where activations are computed sequentially. Quantization---representing weights and/or activations with fewer bits---is the predominant technique for reducing memory footprint and accelerating inference.

\subsection{The Dequantization Bottleneck}

The dominant quantization paradigm for LLM inference employs low-bit weights (INT8 or INT4) combined with high-precision activations (BF16 or FP16). This asymmetric design requires \emph{dequantization}: converting quantized weights back to high precision before matrix multiplication. On modern AI accelerators, this step has emerged as a critical performance bottleneck.

A concrete illustration comes from DeepSeek's FlashMLA FP8 kernel on NVIDIA Hopper GPUs~\cite{flashmla-fp8-blog}. Profiling reveals that the dequantization path---converting \texttt{float8\_e4m3} $\to$ \texttt{half} $\to$ \texttt{float32} $\to$ \texttt{bfloat16} followed by scale multiplication---consumes approximately 50 clock cycles per KV token, while the matrix multiply-accumulate (MMA) operations for 64 query heads require only 34 cycles. The kernel is therefore \emph{dequantization-bound}: Tensor Cores sit idle while CUDA Cores struggle to feed them with dequantized data.

This phenomenon is even more pronounced on architectures with decoupled compute units, such as Huawei Ascend NPUs. The Ascend 910B architecture features a heterogeneous compute structure comprising Vector cores (for scalar/vector operations) and Cube cores (for high-throughput matrix operations). Dequantization---which involves element-wise type conversion and scaling---must execute on Vector cores, while the subsequent GEMM utilizes Cube cores. The disparity in throughput between these units creates a severe pipeline stall: Cube cores wait for Vector cores to complete dequantization, resulting in significant underutilization of the accelerator's peak compute capacity. A recent study on W4A16 kernels for Ascend 910~\cite{w4a16-ascend} independently confirms that ``the primary bottleneck is not dequantization computation itself, but extra global memory transfer for the weight''---i.e., the round-trip HBM traffic caused by dequantization dominates latency. Similar dequantization-bound behavior has been observed on NVIDIA GPUs~\cite{flashmla-fp8-blog, qserve}.

The dequantization bottleneck has attracted considerable attention. On NVIDIA GPUs, QServe~\cite{qserve} reports 20--90\% runtime overhead from INT4 dequantization and proposes compute-aware weight reordering to mitigate it; LiquidGEMM~\cite{liquidgemm} achieves up to $2.9\times$ speedup over prior W4A8 kernels by deferring dequantization to the GEMM epilogue; TurboMind~\cite{turbomind} further optimizes the mixed-precision pipeline through offline weight packing and fused dequantization. These efforts, while effective, share a common limitation: they \emph{optimize} the dequantization path rather than \emph{eliminating} it.

\subsection{The MSD Approach}

We propose \textbf{Multi-Scale Dequant (MSD)}, a fundamental rethinking of the quantization workflow. Rather than quantizing weights to low precision and then dequantizing them during inference, MSD preserves weights in INT8 format and instead \emph{decomposes BF16 activations into multiple INT8 components}.

Specifically, for an activation vector $x \in \mathbb{R}^n$ in BF16 format and a weight matrix $W \in \mathbb{Z}_8^{m \times n}$ in INT8 format, MSD computes:
\begin{equation}
x \approx \alpha \cdot x^{(1)} + \beta \cdot x^{(2)}, \quad x^{(i)} \in \mathbb{Z}_8^n,
\end{equation}
where $\alpha, \beta \in \mathbb{R}$ are scaling coefficients. The matrix multiplication is then performed as:
\begin{equation}
W x \approx \alpha \cdot (W x^{(1)}) + \beta \cdot (W x^{(2)}).
\end{equation}

Critically, both $W$ and $x^{(i)}$ are in INT8 format, allowing $W x^{(i)}$ to execute as native INT8$\times$INT8 GEMM on hardware tensor cores (e.g., Ascend Cube cores, NVIDIA Tensor Cores) without any dequantization step. The final result is reconstructed by scaling and summing the two partial outputs.

This approach is conceptually related to---yet fundamentally different from---ABQ-LLM~\cite{abq-llm}, which decomposes \emph{quantized weights} into binary components via Binary TensorCore (BTC) equivalents. Both methods replace a single mixed-precision GEMM with multiple uniform-precision GEMMs followed by scaled accumulation; however, MSD decomposes the \emph{activation} side, which is more natural on architectures where activations arrive in high precision and weights are already in low-precision format.

\subsection{Contributions}

Our contributions are threefold:
\begin{itemize}
    \item \textbf{Dequantization-free quantization framework:} To our knowledge, MSD is the first activation-side multi-scale decomposition framework targeting dequantization bottlenecks on decoupled low-precision inference architectures. By decomposing activations rather than lifting weights, MSD removes weight/KV dequantization from the GEMM critical path.
    \item \textbf{Theoretical analysis across precision formats:} We derive tight error bounds for MSD across weight formats: $\sim$16 effective bits from two INT8 passes (W8A16, error bound $M/64516$), and $\sim$6.6 effective bits from two MXFP4 passes (W4A16, error bound $\alpha/64$ per block)---surpassing single-pass MXFP8's 5.24 bits. We also derive closed-form latency models showing that MSD avoids the Vector-Cube dequantization bottleneck while maintaining comparable effective Cube compute time.
    \item \textbf{Numerical validation:} We conduct element-level accuracy simulations demonstrating that MSD does not degrade accuracy compared to dequantization baselines for both GEMM and Flash Attention kernels, and in many settings achieves lower L2 error---for both INT8 (W8A16) and MXFP4 (W4A16) weight formats.
\end{itemize}

\section{Background}
\label{sec:background}

\subsection{Huawei Ascend NPU Architecture}
\label{sec:ascend}

The Huawei Ascend 910B NPU is a massively parallel AI accelerator designed for training and inference workloads. Understanding its architectural characteristics is essential for appreciating the dequantization bottleneck and the design of MSD.

\subsubsection{Heterogeneous Compute Units}

The Ascend 910B features two distinct types of compute units:

\textbf{Vector Cores.} The Vector unit executes scalar and vector operations including element-wise arithmetic, type conversions, and memory access patterns. It operates at lower throughput compared to the Cube unit and is typically used for preprocessing, activation functions, and data movement.

\textbf{Cube Cores.} The Cube unit is Ascend's tensor accelerator, providing high-throughput matrix multiplication via systolic arrays. On Ascend 910B, the Cube unit delivers up to 256 TFLOPS of FP16/BF16 throughput and $2\times$ higher INT8 throughput. Critically, the Cube unit supports native INT8$\times$INT8 GEMM with accumulation to INT32, enabling high-efficiency quantized computation.

\subsubsection{Memory Hierarchy}

The Ascend memory hierarchy consists of:
\begin{itemize}
    \item \textbf{HBM (High Bandwidth Memory):} 32--64 GB capacity, ~1 TB/s bandwidth
    \item \textbf{L0 Buffer:} Software-managed scratchpad for tiling
    \item \textbf{Unified Buffer (UB):} On-chip buffer for Vector core data; not directly accessible by Cube cores
\end{itemize}

Data movement between HBM and compute units is orchestrated via DMA engines, with double-buffering used to overlap transfer and computation.

\subsubsection{The Dequantization Problem on Ascend}

The heterogeneous architecture creates a fundamental mismatch for dequantization-heavy workloads:
\begin{enumerate}
    \item INT8 weights must be loaded from HBM
    \item Vector cores perform type conversion (INT8 $\to$ BF16/FP16) and scale multiplication
    \item Converted weights are written back to HBM (or UB)
    \item Cube cores read from HBM/UB and perform GEMM
\end{enumerate}

\textbf{The CV Communication Bottleneck.} On Ascend 910B, Vector cores and Cube cores have separate on-chip caches (L0 Buffer / Unified Buffer) that are not directly shared. Data produced by Vector cores must be written to HBM before Cube cores can access it, and vice versa. Each dequantization operation requires:
\begin{itemize}
    \item \textbf{Read:} Load INT8 weights from HBM to Vector registers
    \item \textbf{Compute:} Vector cores perform INT8 $\to$ BF16 type conversion + scaling
    \item \textbf{Write:} Write converted BF16 weights back to HBM/UB
    \item \textbf{Read again:} Cube cores read BF16 weights for GEMM
\end{itemize}

This round-trip HBM communication \textbf{doubles the memory bandwidth consumption} compared to a single read. For a weight matrix of size $m \times n$, dequantization alone reads $mn$ bytes (INT8) and writes $2mn$ bytes (BF16), creating severe memory pressure that limits achievable utilization.

\subsection{The Microscaling (MX) Specification}
\label{sec:mx-spec}

The Microscaling (MX) data format~\cite{microscaling} is an emerging standard for low-precision machine learning computation. MX defines a block-based quantization scheme where each block of $B = 32$ elements shares a single scale factor stored in E8M0 format (8-bit exponent, zero mantissa, i.e., a power of two). Element values use fixed-point or floating-point formats such as INT8, FP8 (E4M3/E5M2), or FP4 (E1M2).

The E8M0 shared scale constraint---requiring $\alpha$ and $\beta$ to be powers of two---has important implications for MSD design. Unlike INT8 MSD where $\alpha = M/127$ can take any value, MXFP4 MSD must select $\alpha = 2^{\lceil \log_2(M_b / c) \rceil}$ for some constant $c$, restricting the scaling granularity. This constraint motivates the $\alpha$-relaxation optimization described in Section~\ref{sec:optimization}.

The FP4 E1M2 format used in MXFP4 has representable positive values $\{0, 0.25, 0.5, 0.75, 1.0, 1.25, 1.5, 1.75\}$ with a \emph{uniform step size of 0.25}. This uniformity simplifies the error analysis: the maximum rounding error for any in-range value is exactly half the step size (0.125), as established in Theorem~\ref{thm:error-mxfp4}.

\subsection{Related Work}
\label{sec:related}

\subsubsection{Weight-Only Quantization}

GPTQ~\cite{gptq} and AWQ~\cite{awq} achieve 3--4 bit weight compression with minimal accuracy loss. However, these methods require dequantizing weights to FP16/BF16 before GEMM, incurring the overhead described above. Recent work on Marlin~\cite{marlin} and similar kernels optimizes the dequantization-GEMM fusion on GPUs but does not eliminate the fundamental bottleneck. On Ascend NPUs, He et al.~\cite{w4a16-ascend} present the first practical W4A16 kernel using Vector cores for on-the-fly dequantization and Split-K parallelization, yet report that the redundant HBM transfer remains the dominant cost.

\subsubsection{Activation Quantization}

SmoothQuant~\cite{smoothquant} migrates quantization difficulty from activations to weights via per-channel smoothing, enabling W8A8 INT8 GEMM without dequantization. However, SmoothQuant requires calibration and may suffer accuracy degradation on certain models. LLM.int8()~\cite{llm-int8} handles activation outliers through mixed-precision decomposition but still relies on dequantization for the non-outlier components.

\subsubsection{Dequantization-Aware Kernel Optimization}

A growing body of work targets the dequantization bottleneck through kernel-level optimization. QServe~\cite{qserve} introduces W4A8KV4 quantization with compute-aware weight reordering and register-level parallelism to reduce dequantization latency on GPUs. LiquidGEMM~\cite{liquidgemm} redesigns the W4A8 GEMM pipeline to defer dequantization to the epilogue phase, achieving up to $2.9\times$ speedup. TurboMind~\cite{turbomind} provides a comprehensive mixed-precision inference framework with offline weight packing and fused dequantization. MixPE~\cite{mixpe} proposes performing dequantization \emph{after} per-group integer GEMM, reducing the overhead through shift-and-add operations rather than multipliers. These approaches optimize the dequantization path but do not eliminate it.

\subsubsection{Alternative Computation Paradigms}

Several works seek to bypass dequantization entirely through alternative computational strategies. ABQ-LLM~\cite{abq-llm} decomposes quantized weights into binary components and reconstructs arbitrary-precision GEMM via Binary TensorCore (BTC) equivalents, achieving acceleration for non-standard bit-widths such as W6A6 and W2A8. LUT-GEMM~\cite{park2024} and LUT Tensor Core~\cite{lut-tc} replace dequantization with lookup-table-based computation, precomputing partial dot products to avoid explicit type conversion. T-MAN~\cite{t-man} extends the LUT approach to NPUs with a unified table layout for both prefill and decoding. FIGNA~\cite{figna} takes a hardware design approach, proposing dedicated FP-INT multiply-accumulate units that natively support mixed-precision operations without dequantization.

DQT~\cite{dqt} introduces a nested integer representation where lower-precision values are bit-wise embedded within higher-precision ones, enabling dequantization-free precision switching via bit-shift operations in the \emph{training} context.

\subsubsection{Hardware-Aware Kernel Design}

AMLA~\cite{amla} introduces optimized FlashAttention kernels for Ascend NPUs, achieving high FLOPS utilization through hierarchical tiling and pipelining. However, AMLA focuses on attention computation and does not address the quantized GEMM bottleneck in linear layers. FlashMLA~\cite{deepseek-flashmla} optimizes memory-efficient attention with FP8 KVCache but, as discussed, remains dequantization-bound.

\subsubsection{Positioning of MSD}

MSD differs from all the above approaches in a fundamental way. While weight-only quantization methods (GPTQ, AWQ) require dequantization, kernel optimization methods (QServe, LiquidGEMM, TurboMind) reduce its cost, and alternative paradigms (ABQ-LLM, LUT-based methods) circumvent it through different computational primitives, MSD \emph{removes weight/KV dequantization from the GEMM critical path through a tightly bounded activation decomposition}. The key insight is to keep weights in their native low-precision format (INT8 or MXFP4) and instead decompose the high-precision activation into multiple low-precision components, enabling pure low-precision GEMM on standard hardware---no custom arithmetic units, no lookup tables, no binary decomposition of weights. Among existing methods, ABQ-LLM is the closest in spirit: both replace mixed-precision GEMM with multiple uniform-precision GEMMs. However, ABQ-LLM decomposes on the \emph{weight} side (bit-level), whereas MSD decomposes on the \emph{activation} side (value-level), which is better suited for architectures where weights are pre-quantized and activations are computed on-the-fly.

\section{Method}
\label{sec:method}

This section presents the Multi-Scale Dequant (MSD) framework. We first formalize the decomposition problem, then describe the optimization strategy for computing scaling coefficients, and finally detail the hardware mapping on decoupled architectures such as Ascend NPUs.

\subsection{Problem Formulation}
\label{sec:formulation}

Consider a linear layer with weight matrix $W \in \mathbb{Z}_8^{m \times n}$ (quantized to INT8 offline with per-channel scale $s_W \in \mathbb{R}^m$) and activation vector $x \in \mathbb{R}^n$ (in BF16 format during inference). The standard dequantization-based approach computes:
\begin{equation}
y = \text{dequant}(W) \cdot x = (s_W \odot W_{\text{int8}}) \cdot x,
\end{equation}
where the weight matrix is first dequantized from INT8 to BF16 via per-channel scaling, then multiplied with the BF16 activation. This dequantization step is the bottleneck we aim to eliminate.

MSD takes a different approach: instead of dequantizing $W$ from INT8 to BF16, we decompose the BF16 activation $x$ into $K$ INT8 components:
\begin{equation}
x \approx \sum_{k=1}^{K} \alpha_k \cdot x^{(k)}, \quad x^{(k)} \in \mathbb{Z}_8^n,
\end{equation}
where $\alpha_k \in \mathbb{R}$ are learned or computed scaling coefficients. The output is then computed as:
\begin{equation}
y = \sum_{k=1}^{K} \alpha_k \cdot (W x^{(k)}).
\end{equation}

Each $W x^{(k)}$ is a native INT8$\times$INT8 GEMM, executable directly on hardware tensor cores without dequantization. Since the weight scale $s_W$ is per-output-channel (i.e., each row of $W$ shares a single scale), it can be applied \emph{after} the GEMM and reconstruction:
\begin{equation}
y = s_W \odot \left(\sum_{k=1}^{K} \alpha_k \cdot (W_{\text{int8}} x^{(k)})\right),
\end{equation}
where $\odot$ denotes row-wise scaling. This is a lightweight $O(m)$ Vector operation, analogous to how V's per-channel scale is applied after the $PV$ GEMM in attention (Section~\ref{sec:attention}). In practice, we find $K=2$ provides an excellent trade-off between accuracy and computational cost.

\subsection{Two-Pass Decomposition Algorithm}
\label{sec:decomposition}

For $K=2$, we use a two-pass decomposition analogous to multi-grid correction in numerical analysis. The algorithm proceeds as follows:

\textbf{Pass 1: Coarse-Scale Quantization.} Compute the primary scale and quantized activation:
\begin{align}
\alpha &= \frac{\|x\|_\infty}{127}, \label{eq:alpha} \\
x^{(1)} &= \text{clamp}\left(\text{round}\left(\frac{x}{\alpha}\right), -128, 127\right). \label{eq:x1}
\end{align}

\textbf{Pass 2: Fine-Scale Residual.} Compute the residual. Since the quantization error of Pass 1 is bounded in $(-0.5\alpha, 0.5\alpha)$, we directly use $2 \times 127 = 254$ as the secondary scale without computing max:
\begin{align}
r &= x - \alpha \cdot x^{(1)}, \label{eq:residual} \\
\beta &= \frac{\alpha}{254}, \label{eq:beta} \\
x^{(2)} &= \text{clamp}\left(\text{round}\left(\frac{r}{\beta}\right), -128, 127\right). \label{eq:x2}
\end{align}

The final approximation is $x \approx \alpha \cdot x^{(1)} + \beta \cdot x^{(2)}$. The decomposition is performed on-the-fly for each activation vector during inference, with negligible overhead compared to the subsequent GEMM operations.

\subsubsection{MXFP4 Instantiation}
\label{sec:mxfp4-instantiation}

When weights are in MXFP4 format (W4A16), the MSD framework adapts to the Microscaling (MX) specification~\cite{microscaling}, which imposes two key constraints: (1) the shared scale per 32-element block must be a power of two (E8M0 format), and (2) quantized values use the FP4 E1M2 format with representable positive values $\{0, 0.25, 0.5, 0.75, 1.0, 1.25, 1.5, 1.75\}$ and a \emph{uniform step size of 0.25}.

These constraints are not merely mathematical limitations---they are essential for \textbf{hardware realizability}. The E8M0 power-of-two constraint ensures that scaling operations ($x / \alpha$ and $r / \beta$) reduce to simple exponent adjustments, which can be implemented directly in hardware as bit-shifts on floating-point exponents without multipliers or dividers. Our choice of $\alpha$ and $\beta$ as powers of two is therefore deliberate: it ensures the entire decomposition pipeline---scale, quantize, compute residual, re-scale---maps onto native MX hardware instructions without any software-emulated scaling. This is why we adopt the E8M0-constrained $\alpha = 2^{\lceil \log_2(M_b / 1.859375) \rceil}$ and $\beta = \alpha / 2^4$ rather than the mathematically optimal (but non-power-of-two) scales that would arise from unconstrained optimization.

These constraints change the decomposition design in three ways:

\textbf{1. $\alpha$ selection with E8M0 power-of-two constraint.} Since $\alpha$ must be a power of two, we select:
\begin{equation}
\alpha = 2^{\lceil \log_2(M_b / 1.859375) \rceil},
\label{eq:alpha-mxfp4}
\end{equation}
where $M_b = \|x_b\|_\infty$ is the maximum absolute value in the 32-element block, and $1.859375 = 1.75 \times 17/16$. The factor 1.859375 \emph{deliberately exceeds} FP4's maximum representable value of 1.75. Elements with $|x_i / \alpha| \in (1.75, 1.859375]$ are clipped to $\pm 1.75$ via round-to-nearest, and their residual is captured by Pass 2. This relaxation allows $\alpha$ to be halved more often, improving Pass 1 quantization granularity for all 32 elements in the block.

\textbf{2. $\beta$ derived directly from $\alpha$ (no max-reduction).} Unlike the INT8 case where $\beta = \alpha / 254$ is derived from the INT8 range, for MXFP4 we set:
\begin{equation}
\beta = \frac{\alpha}{16} = \frac{\alpha}{2^4}.
\label{eq:beta-mxfp4}
\end{equation}
Since $\beta$ is also a power of two, it satisfies the E8M0 constraint. Crucially, $\beta$ is computed from $\alpha$ alone---no max-reduction over the residual is needed. This is a direct consequence of the MSD framework: the Pass 1 residual bound is known analytically ($\|r\|_\infty \leq \alpha/8$), so the Pass 2 scale can be set to cover this range without examining the data.

\textbf{3. Truncation analysis.} The scaled residual satisfies $|r_i / \beta| \leq 2$, which exceeds FP4's range of $[-1.75, 1.75]$. Approximately 12.5\% of elements fall in $(1.75, 2]$ and are clipped to $\pm 1.75$ via round-to-nearest. This is an \emph{intentional} trade-off: the 87.5\% of elements that are normally quantized achieve error $\leq \alpha/128$, while the clipped 12.5\% have error $\leq \alpha/64$ (Theorem~\ref{thm:error-mxfp4}).

Algorithm~\ref{alg:msd-mxfp4} summarizes the MXFP4 decomposition procedure per 32-element block.

\begin{algorithm}[h]
\caption{MSD MXFP4 Decomposition (per 32-element block)}
\label{alg:msd-mxfp4}
\begin{algorithmic}[1]
\Require Block $x_b \in \mathbb{R}^{32}$ (BF16), Weight block $W_b$ (MXFP4)
\Ensure Decomposed components $q_1, q_2 \in \text{FP4}^{32}$, scales $\alpha, \beta \in \text{E8M0}$
\State $M_b \gets \|x_b\|_\infty$
\State $\alpha \gets 2^{\lceil \log_2(M_b / 1.859375) \rceil}$ \Comment{E8M0 scale (power of two)}
\State $s \gets x_b / \alpha$ \Comment{Scale to FP4 range}
\State $q_1 \gets \text{round\_to\_FP4}(s)$ \Comment{Round-to-nearest on E1M2 grid; $|s|>1.75$ maps to $\pm 1.75$}
\State $r \gets x_b - \alpha \cdot q_1$ \Comment{Residual; $\|r\|_\infty \leq \alpha/8$}
\State $\beta \gets \alpha / 16$ \Comment{E8M0 scale; no max-reduction needed}
\State $q_2 \gets \text{round\_to\_FP4}(r / \beta)$ \Comment{12.5\% elements clipped to $\pm 1.75$; error $\leq \alpha/64$}
\State \Return $q_1, q_2, \alpha, \beta$
\end{algorithmic}
\end{algorithm}

\subsection{Optimization of Scaling Coefficients}
\label{sec:optimization}

The two-pass decomposition minimizes the $L_\infty$ reconstruction error in a greedy manner. Alternatively, one can formulate the optimal decomposition as a constrained least-squares problem:
\begin{equation}
\min_{\alpha, \beta, x^{(1)}, x^{(2)}} \|x - (\alpha x^{(1)} + \beta x^{(2)})\|_2^2 \quad \text{s.t.} \quad x^{(i)} \in \mathbb{Z}_8^n.
\end{equation}

This integer optimization is NP-hard in general. In practice, we find that the greedy two-pass algorithm achieves near-optimal results with $O(n)$ complexity, making it suitable for online inference. For offline calibration scenarios, grid search over candidate $(\alpha, \beta)$ pairs can provide marginal improvements.

\textbf{Tighter bounds via fractional scaling.} A simple refinement is to use $\alpha = \|x\|_\infty / 127.49$ instead of $\|x\|_\infty / 127$ (and correspondingly $\beta = \alpha / 254.98$). The rationale is as follows: with $\alpha = M/127$, the extremal element satisfies $|x_i / \alpha| = 127$ exactly, so the rounding error is zero for that element but up to $\alpha/2$ for others. With $\alpha = M/127.49$, the extremal element maps to $127.49$, which rounds to $127$ with residual $0.49\alpha < 0.5\alpha$. This tightens the worst-case residual bound from $\alpha/2$ to $0.49\alpha$, and the improvement propagates through subsequent passes:
\begin{equation}
\text{Error bound:} \quad \frac{M}{127.49 \times 254.98 \times 2} \approx \frac{M}{65015} \quad \text{vs.} \quad \frac{M}{127 \times 254 \times 2} = \frac{M}{64516}.
\end{equation}
While the improvement is modest ($\sim$0.8\%), it is essentially free---requiring no additional computation, only a change in the scaling constants.

\textbf{MXFP4 $\alpha$ relaxation optimization.} For the MXFP4 instantiation, a different form of scaling optimization yields substantial gains. The key insight is to relax $\alpha$'s upper bound beyond FP4's maximum representable value (1.75). Table~\ref{tab:mxfp4-evolution} shows the progressive improvement from three design iterations:

\begin{table}[htbp]
\centering
\setlength{\tabcolsep}{4pt} 
\begin{tabular}{@{} l c c c c c c @{}}
\toprule
\textbf{Config.} & \textbf{$\alpha$ Bound} & \textbf{$\beta$} & \textbf{Clip\%} & \textbf{Eff. Bits} & \textbf{L2 Error} & \textbf{vs.\ MXFP8} \\
\midrule
v1 (orig.) & 1.75 & $\alpha/8$ & 0\% & 5.79 & 0.0182 & $1.5\times$ \\
v2 (finer $\beta$) & 1.75 & $\alpha/16$ & $\sim$12.5\% & 6.55 & 0.0107 & $2.5\times$ \\
\textbf{v3 (opt.)} & \textbf{1.859375} & \textbf{$\alpha$/16} & $\sim$\textbf{12.5\%} & \textbf{6.65} & \textbf{0.0101} & $\mathbf{2.6\times}$ \\
\bottomrule
\end{tabular}
\vspace{0.15cm} 
\caption{MXFP4 decomposition design evolution (2048$\times$2048 GEMM, Gaussian activation, MXFP4 weight).}
\label{tab:mxfp4-evolution}
\end{table}

The v1$\to$v2 improvement comes from using a finer $\beta$: since $\beta = \alpha/8$ leaves residual headroom (the maximum $|r/\beta| = 2$, but only 1.75 is representable), switching to $\beta = \alpha/16$ halves the Pass 2 quantization step at the cost of $\sim$12.5\% clipping. The v2$\to$v3 improvement comes from relaxing $\alpha$'s upper bound to 1.859375: when $\max|block|/\alpha$ falls in $(1.75, 1.859375]$, $\alpha$ can be halved, doubling Pass 1 precision for the entire block. The overflow is exactly captured by the $\beta = \alpha/16$ residual pass (Theorem~\ref{thm:error-mxfp4}).

\subsection{Extension to $K > 2$ Scales}

While we use $K=2$ in this paper, the MSD framework is general and supports arbitrary decomposition granularity $K$. The key insight is that the multi-scale decomposition can be applied iteratively: after the second pass, we can continue decomposing the residual to obtain $x^{(3)}, x^{(4)}, \dots$

For the BF16 + INT8 combination studied in this paper, we find $K=2$ provides sufficient accuracy—indeed, it achieves lower error than traditional dequantization-based approaches while maintaining comparable effective compute time. Adding more scales would increase the number of GEMMs without meaningful accuracy gains.

However, $K > 2$ becomes valuable when the precision gap between activation and weight is larger. For example:
\begin{itemize}
    \item \textbf{BF16 $\to$ INT4:} When decomposing BF16 activations to INT4 components, two scales may not fully capture the dynamic range. We can use $K=3$ or $K=4$ to progressively refine the residual.
    \item \textbf{FP16 $\to$ INT4:} Similar to BF16, but with different dynamic range characteristics.
    \item \textbf{Mixed-precision scenarios:} For emerging formats like FP8 or MXFP4, the optimal $K$ depends on the specific precision combination.
\end{itemize}

The general $K$-scale MSD algorithm follows the same pattern: each additional scale $\gamma_i$ can be computed directly from the previous scale without explicit max computation (since the residual error after $i-1$ passes is bounded by $\alpha / (2 \cdot 254^{i-1})$).

\textbf{Trade-off:} Increasing $K$ improves approximation accuracy but requires more GEMM operations. On accelerators with strong INT4 throughput (typically $4\times$ BF16), this trade-off can be \emph{better than break-even}---yielding actual Cube-side speedup, as we analyze below.

\subsubsection{The BF16 + MXFP4 Case: MSD-MXFP4 for W4A16}

The MXFP4 weight quantization scenario (W4A16) deserves special attention. In the MX ecosystem, the natural baseline for activation quantization is single-pass MXFP8 (5.24 effective bits). MSD with $K=2$ MXFP4 passes achieves $\sim$6.6 effective bits (Theorem~\ref{thm:error-mxfp4}), surpassing MXFP8 by 1.4 bits---using only two 4-bit passes rather than one 8-bit pass.

The error bound for MSD-MXFP4 is $\alpha/64$ per 32-element block (Theorem~\ref{thm:error-mxfp4}), which is a per-block guarantee rather than the per-vector guarantee of the INT8 variant. This reflects the MX specification's per-block scaling: each 32-element block has its own E8M0 scale $\alpha$, and the error bound scales accordingly.

\textbf{Effective compute time.} On modern accelerators (e.g., NVIDIA Blackwell, Ascend 910B), FP4 GEMM throughput is approximately $4\times$ that of BF16, while FP8 throughput is $2\times$. The compute time comparison is:

\begin{table}[h]
\centering
\caption{Effective compute time: MXFP8 baseline vs. MSD-MXFP4 ($K=2$)}
\label{tab:mxfp4-compute}
\begin{tabular}{lccc}
\toprule
Method & Raw FLOPs & Throughput & Effective Time \\
\midrule
MXFP8$\times$MXFP4 (dequant) & $2mn$ (FP8) & $2\times$ & $1.0mn$ \\
\textbf{MSD-MXFP4 ($2\times$FP4)} & $4mn$ (FP4) & $4\times$ & $\mathbf{1.0mn}$ \\
\bottomrule
\end{tabular}
\end{table}

Two FP4 GEMMs at $4\times$ throughput yield the same effective Cube time as one FP8 GEMM at $2\times$ throughput. MSD-MXFP4 therefore maintains comparable compute time while achieving 1.4 more effective bits of activation precision, removing weight dequantization from the critical path, and providing a provable per-block error bound.

This makes the MXFP4 scenario uniquely favorable for MSD: the lower weight precision (4-bit) makes activation precision more critical, and MSD's two-pass decomposition fills this gap by surpassing the 8-bit activation baseline. Combined with the growing adoption of W4 quantization (GPTQ, AWQ, QuIP\#) and the MX standard~\cite{microscaling}, MSD-MXFP4 is a practical approach for next-generation W4A16 inference engines.

\subsection{Hardware Mapping}
\label{sec:hardware}

The MSD workflow maps efficiently to architectures with decoupled compute units. We use Ascend 910B as a concrete example:

\textbf{Step 1: Decomposition (Vector Core).} The activation vector $x$ is loaded into L0 buffer. Vector cores compute $\alpha$, $x^{(1)}$, the residual $r$, $\beta$, and $x^{(2)}$ via parallel element-wise operations. This step is memory-bandwidth-bound and completes quickly.

\textbf{Step 2: Dual GEMM (Cube Core).} Both $x^{(1)}$ and $x^{(2)}$ are fed to the Cube core for INT8$\times$INT8 GEMM with weight matrix $W$. Modern tensor cores (Ascend Cube, NVIDIA Tensor Cores) support native INT8$\times$INT8$\to$INT32 accumulation at full throughput.

\textbf{Step 3: Reconstruction (Vector Core).} The two partial outputs $y^{(1)} = W_{\text{int8}} x^{(1)}$ and $y^{(2)} = W_{\text{int8}} x^{(2)}$ are scaled by $\alpha$ and $\beta$ respectively, summed, and then multiplied by the per-channel weight scale $s_W$ to produce the final BF16 output $y = s_W \odot (\alpha y^{(1)} + \beta y^{(2)})$.

To maximize throughput, we implement a fused tiled kernel where the weight tile remains resident on-chip across both MSD passes (the \emph{resident-tile} condition), decomposition, GEMM, and reconstruction overlap via double buffering, and partial results are not materialized to HBM. When the tile cannot remain resident, the implementation falls back to a conservative two-read model with approximately $\sim 1.5\times$ traffic reduction in the dominant term rather than $\sim 3\times$.

Under the resident-tile model, MSD reduces HBM traffic from $3mn + 2bn + 2bm$ (dequant) to $mn + 4bn + 2bm$---a $\sim 3\times$ reduction in the dominant term since $b \ll m,n$. Since INT8 GEMM throughput is $2\times$ that of BF16, MSD's $4bmn$ INT8 FLOPs have comparable effective Cube time to the dequant baseline's $2bmn$ BF16 FLOPs. For MXFP4, two FP4 GEMMs at $4\times$ throughput yield the same effective Cube time as one FP8 GEMM at $2\times$ throughput. A detailed cost analysis with latency models is provided in Section~\ref{sec:analysis}.

\subsection{Vector Compute Overhead}

The MSD decomposition and reconstruction involve only $O(bn+bm)$ Vector FLOPs, compared to $O(mn)$ for dequantization (Table~\ref{tab:vector-ops}). For typical transformer layers ($d=4096$) with small $b$, Vector ops are $<0.1\%$ of total FLOPs.

\begin{table}[h]
\centering
\caption{Vector-side compute operations per layer}
\label{tab:vector-ops}
\begin{tabular}{lcc}
\toprule
Operation & FLOPs & Description \\
\midrule
Decomposition (Pass 1) & $3n$ & abs, max, divide, round, clamp \\
Decomposition (Pass 2) & $5n$ & residual, divide, round, clamp \\
Reconstruction & $2m$ & scale multiply, add, cast \\
\midrule
Total Vector FLOPs & $O(n + m)$ & \\
\end{tabular}
\end{table}

\subsection{MSD for Mixed-Precision Configurations}

MSD is a general framework that applies to any combination of activation and weight precision:

\begin{table}[h]
\centering
\caption{MSD applicability to various precision configurations. $K$ denotes the number of decomposition scales. Eff.\ Bits and Error Bound are for the $K$ shown.}
\label{tab:precision-configs}
\begin{tabular}{lclccc}
\toprule
Activation & Weight & MSD Decomposition & $K$ & Eff.\ Bits & Error Bound \\
\midrule
BF16 & INT8 & BF16 $\to$ INT8 + INT8 & 2 & $\sim$16 & $M/64516$ \\
BF16 & MXFP4 & BF16 $\to$ MXFP4 + MXFP4 & 2 & $\sim$6.6 & $\alpha/64$ \\
BF16 & FP8 & BF16 $\to$ FP8 + FP8 & 2 & $\sim$16 & TBD \\
BF16 & INT4 & BF16 $\to$ INT4 + INT4 + INT4 & 3 & $\sim$11 & TBD \\
\midrule
\multicolumn{6}{l}{\emph{Baselines for comparison:}} \\
\midrule
BF16 & INT8 & Dequant (BF16$\times$BF16) & --- & $\sim$8 & --- \\
BF16 & MXFP4 & MXFP8 activation & --- & $\sim$5.24 & --- \\
\bottomrule
\end{tabular}
\end{table}

\textbf{Key insight:} MSD shifts the decomposition from weights to activations, enabling native low-precision GEMM regardless of the weight format. For both INT8 and MXFP4 weight formats, MSD's $K=2$ decomposition surpasses the respective single-pass activation baselines while maintaining comparable effective compute time.

\subsection{Decode vs. Prefill: When to Use MSD}

MSD is designed for \textbf{Decode-heavy inference workloads} where batch sizes are small ($b \ll m, n$) and latency per token is critical. In decode, the additional MSD GEMM is absorbed by INT8's $2\times$ throughput, and the Vector-side decomposition/merging cost is $O(bn+bm) \ll O(bmn)$. In prefill with large batch sizes, the extra MSD GEMM grows linearly with $b$ and the dequantization cost is amortized, so MSD is not recommended. Detailed analysis for the attention case is provided in Section~\ref{sec:attention}, and the operator coverage policy in Section~\ref{sec:discussion}.

\subsection{Fused Tiled Kernel Realization}
\label{sec:fused-kernel}

The performance claims in this paper depend on implementing MSD as a \emph{fused tiled kernel}, not as two standalone GEMM invocations. If the two MSD GEMM passes were executed as independent kernels, the weight/KV data would be read twice from HBM, partial outputs would be materialized, and kernel launch overhead would erode the benefits. A fused tiled kernel avoids these pitfalls through the following design principles:

\begin{enumerate}
    \item \textbf{Resident weight/KV tile.} Each weight or KV tile is loaded from HBM \emph{once} into on-chip buffer and consumed by both MSD passes before eviction. The tile must satisfy $m_t k_t b_w \leq C_{\text{tile}}$, where $C_{\text{tile}}$ is the available on-chip capacity. For attention decode with $B_c = 64$, $d = 128$: the KV tile is only 8\,KB---easily resident.
    \item \textbf{Online activation decomposition.} The activation components $x^{(1)}, x^{(2)}$ (or $Q^{(1)}, Q^{(2)}$, $P^{(1)}, P^{(2)}$ in attention) are generated on-the-fly per tile, not materialized to HBM.
    \item \textbf{In-register/streaming partial results.} The partial outputs $y^{(1)}, y^{(2)}$ from the two GEMM passes are scaled, summed, and accumulated into the final output buffer via FixPipe (on Ascend) or register-level operations---without intermediate HBM writes.
    \item \textbf{Single final writeback.} Only the reconstructed output $y = s_W \odot (\alpha y^{(1)} + \beta y^{(2)})$ is written to HBM.
\end{enumerate}

Figure~\ref{fig:data-path} contrasts the data paths of dequantization-based and MSD-based execution.

\begin{figure}[h]
\centering
\begin{tikzpicture}[
    box/.style={draw, rounded corners, minimum width=2.2cm, minimum height=0.7cm, align=center, font=\small},
    arrow/.style={-{Stealth[length=2.5mm]}, thick},
    label/.style={font=\small\bfseries, anchor=south},
    every node/.style={font=\small}
]

\node[label] at (-1.5, 4.2) {Dequant Path};

\node[box, fill=blue!10] (hbm1) at (0, 3.5) {HBM\\INT8 W/KV};
\node[box, fill=orange!15] (vec1) at (3.2, 3.5) {Vector\\dequant\\INT8$\to$BF16};
\node[box, fill=blue!10] (hbm2) at (6.4, 3.5) {HBM\\BF16 W/KV};
\node[box, fill=green!15] (cube1) at (9.6, 3.5) {Cube\\BF16 GEMM};

\draw[arrow] (hbm1) -- (vec1);
\draw[arrow, red, thick] (vec1) -- node[above, font=\scriptsize, text=red] {round-trip} (hbm2);
\draw[arrow] (hbm2) -- (cube1);

\node[label] at (-1.5, 1.5) {MSD Fused};

\node[box, fill=blue!10] (hbm3) at (0, 0.8) {HBM\\INT8 W/KV};
\node[box, fill=green!30] (tile) at (3.2, 0.8) {On-chip\\resident tile};
\node[box, fill=yellow!15] (decomp) at (3.2, -0.4) {Decompose\\$x \to x^{(1)}, x^{(2)}$};
\node[box, fill=green!15] (cube2) at (6.4, 0.8) {Cube Pass 1\\$W x^{(1)}$};
\node[box, fill=green!15] (cube3) at (6.4, -0.4) {Cube Pass 2\\$W x^{(2)}$};
\node[box, fill=gray!15] (merge) at (9.6, 0.2) {Merge + write\\$\alpha y^{(1)}\!+\!\beta y^{(2)}$};

\draw[arrow] (hbm3) -- (tile);
\draw[arrow] (tile) -- (cube2);
\draw[arrow] (tile) -- (cube3);
\draw[arrow] (decomp) -- (cube2);
\draw[arrow] (decomp) -- (cube3);
\draw[arrow] (cube2) -- (merge);
\draw[arrow] (cube3) -- (merge);

\node[font=\scriptsize, text=red, anchor=west] at (11, 3.5) {dominant $\sim 3mn$};
\node[font=\scriptsize, text=blue!70!black, anchor=west] at (11, 0.2) {dominant $\sim mn$};

\end{tikzpicture}
\caption{Data path comparison. \textbf{Top:} Dequantization-based execution requires INT8$\to$BF16 conversion on Vector cores followed by a round-trip through HBM before Cube GEMM, yielding dominant HBM traffic of $\sim 3mn$ bytes. \textbf{Bottom:} MSD fused tiled execution loads the weight/KV tile once into on-chip buffer, decomposes activations on-the-fly, and runs two low-precision GEMM passes against the same resident tile. Partial results are merged on-chip; only the final output is written to HBM, yielding dominant HBM traffic of $\sim mn$ bytes. When the left matrix has few rows (e.g., decode with $b \ll m, n$), the two GEMM passes can be further fused into a single GEMM by concatenating $X^{(1)}$ and $X^{(2)}$ along the row dimension (see text).}
\label{fig:data-path}
\end{figure}

\textbf{GEMM pass fusion for small-batch decode.} The MSD decomposition produces two activation components $X^{(1)}, X^{(2)} \in \mathbb{R}^{b \times n}$, requiring two separate GEMM calls: $W X^{(1)}$ and $W X^{(2)}$. However, when $b$ is small---as is typical in decode where $b \ll m,n$---the two GEMM passes can be \emph{fused into a single GEMM call} by concatenating the components along the row dimension:
\begin{equation}
\begin{bmatrix} \alpha Y^{(1)} \\ \beta Y^{(2)} \end{bmatrix} = \begin{bmatrix} \alpha X^{(1)} \\ \beta X^{(2)} \end{bmatrix} \cdot W^\top,
\end{equation}
where $\begin{bmatrix} \alpha X^{(1)} \\ \beta X^{(2)} \end{bmatrix} \in \mathbb{R}^{2b \times n}$ and the result is a $2b \times m$ matrix from which $\alpha Y^{(1)}$ and $\beta Y^{(2)}$ are extracted and summed. This reduces two kernel launches to one, eliminates inter-kernel synchronization, and improves Cube utilization. When $b$ is large (e.g., large-batch prefill), this concatenation may exceed on-chip capacity, and the two GEMM passes must be computed separately.

When the resident-tile condition cannot be met (e.g., very large weight matrices without sufficient on-chip capacity), the implementation falls back to the conservative two-read model, reducing the traffic benefit from $\sim 3\times$ to $\sim 1.5\times$ in the dominant term while still avoiding the dequantization round-trip.

\textbf{Attention decode example.} The strongest application of the fused tiled kernel is attention decode, where KV tiles are small and the memory-bound regime makes HBM savings most impactful. For each KV block: (1) load $K_t, V_t$ once into on-chip buffer; (2) decompose $Q \to Q^{(1)}, Q^{(2)}$; (3) compute dual GEMMs $Q^{(i)} K_t^\top$ using the same resident $K_t$; (4) merge and apply online softmax; (5) decompose $P_t \to P_t^{(1)}, P_t^{(2)}$; (6) compute dual GEMMs $P_t^{(i)} V_t$ using the same resident $V_t$; (7) merge and update running output. Only the final $O$ is written to HBM. See Algorithm~\ref{alg:msd-attention} in Section~\ref{sec:attention} for the complete procedure.

Linear and grouped GEMM kernels follow the same tile-level principle: each resident weight tile is loaded once, consumed by multiple activation components, and merged before final writeback.

\subsection{Pseudocode}

Algorithm~\ref{alg:msd} summarizes the complete MSD inference procedure for a single linear layer.

\begin{algorithm}[h]
\caption{MSD Inference for a Single Linear Layer}
\label{alg:msd}
\begin{algorithmic}[1]
\Require Activation $x \in \mathbb{R}^n$ (BF16), Weight $W_{\text{int8}} \in \mathbb{Z}_8^{m \times n}$ (INT8), Per-channel scale $s_W \in \mathbb{R}^m$
\Ensure Output $y \in \mathbb{R}^m$ (BF16)
\State $\alpha \gets \|x\|_\infty / 127$
\State $x^{(1)} \gets \text{clamp}(\text{round}(x / \alpha), -128, 127)$ \Comment{INT8}
\State $r \gets x - \alpha \cdot x^{(1)}$ \Comment{BF16 residual}
\State $\beta \gets \alpha / 254$ \Comment{Directly use 254 as scale, no max needed}
\State $x^{(2)} \gets \text{clamp}(\text{round}(r / \beta), -128, 127)$ \Comment{INT8}
\State $y^{(1)} \gets W_{\text{int8}} \cdot x^{(1)}$ \Comment{Native INT8$\times$INT8 GEMM (Cube)}
\State $y^{(2)} \gets W_{\text{int8}} \cdot x^{(2)}$ \Comment{Native INT8$\times$INT8 GEMM (Cube)}
\State $y \gets s_W \odot (\alpha \cdot y^{(1)} + \beta \cdot y^{(2)})$ \Comment{Reconstruct + apply weight scale (Vector)}
\State \Return $y$
\end{algorithmic}
\end{algorithm}

\section{MSD for Multi-Head Attention}
\label{sec:attention}

This section details how Multi-Scale Dequant (MSD) applies to the attention computation in transformers, following the notation of FlashAttention~\cite{flashattn}.

\subsection{Standard Attention Formulation}

Given queries $Q \in \mathbb{R}^{N \times d}$, keys $K \in \mathbb{R}^{M \times d}$, and values $V \in \mathbb{R}^{M \times d}$, the attention output is:
\begin{equation}
O = \text{Attention}(Q, K, V) = \text{softmax}\left(\frac{Q K^\top}{\sqrt{d}}\right) V,
\end{equation}
where $N$ is the query sequence length, $M$ is the key/value sequence length, and $d$ is the head dimension.

In FlashAttention, the computation uses tiling to reduce memory IO:
\begin{enumerate}
    \item Compute $S = Q K^\top \in \mathbb{R}^{N \times M}$ (attention scores)
    \item Compute $P = \text{softmax}(S) \in \mathbb{R}^{N \times M}$ (attention weights)
    \item Compute $O = P V \in \mathbb{R}^{N \times d}$ (output)
\end{enumerate}

In quantized FlashAttention, $K$ and $V$ are stored in INT8 format with per-channel scales $s_K, s_V \in \mathbb{R}^d$ (KVCache), while $Q$ is typically in BF16. The dequantization bottleneck arises when converting $K$ and $V$ from INT8 to BF16 (via $K_{\text{bf16}} = K_{\text{int8}} \odot s_K$) before the GEMM operations.

\subsection{MSD for Attention: Leveraging Online Softmax}

Standard FlashAttention computes attention scores in blocks, using online softmax which maintains the maximum value for each row to ensure numerical stability. Specifically, during the tiling-based computation, FlashAttention tracks:
\begin{equation}
m_i = \max_j S_{ij}
\end{equation}
for each block $i$, which is already computed as part of the softmax rescaling.

MSD leverages this existing $m_i$ value for decomposing the attention weight matrix $P$. After the softmax produces $P \in \mathbb{R}^{N \times M}$ in BF16, we decompose it similarly to activations:

\textbf{Step 1: Absorb K scale into Q, then decompose.}
Since $K_{\text{real}} = K_{\text{int8}} \odot s_K$ where $s_K \in \mathbb{R}^d$ is per-channel, we have:
\begin{equation}
S = Q K_{\text{real}}^\top / \sqrt{d} = (Q \odot s_K) K_{\text{int8}}^\top / \sqrt{d}.
\end{equation}
We first absorb the K scale into Q: $\tilde{Q} = Q \odot s_K$, then apply MSD decomposition to $\tilde{Q}$:
$\alpha_Q = \|\tilde{Q}\|_\infty / 127$ and $\beta_Q = \alpha_Q / 254$.

\textbf{Step 2: $S = \tilde{Q} K^\top$ with dual GEMM.}
$\tilde{Q}$ is decomposed while $K$ remains in INT8. We compute:
\begin{align}
S^{(1)} &= \tilde{Q}^{(1)} \cdot K_{\text{int8}}^\top \in \mathbb{R}^{N \times M}, \\
S^{(2)} &= \tilde{Q}^{(2)} \cdot K_{\text{int8}}^\top \in \mathbb{R}^{N \times M}, \\
S &= (\alpha_Q \cdot S^{(1)} + \beta_Q \cdot S^{(2)}) / \sqrt{d}.
\end{align}

Each $\tilde{Q}^{(i)} K_{\text{int8}}^\top$ is a native INT8$\times$INT8 GEMM. The scaling and summation are element-wise Vector operations.

\textbf{Step 3: Online Softmax with MSD fusion.}
During online softmax, we first compute $P = \exp(S - m)$ where $m = \max(S)$ is the row-wise maximum already tracked for numerical stability. Note that $P$ here is the \emph{unnormalized} softmax numerator (the denominator $\ell = \sum_j P_{ij}$ is applied later); $P \in [0, 1]^{N \times M}$ consists of non-negative values. We apply standard MSD decomposition to $P$:
\begin{align}
\alpha_P &= \frac{\|P\|_\infty}{127}, \\
P^{(1)} &= \text{clamp}\left(\text{round}\left(\frac{P}{\alpha_P}\right), -128, 127\right), \\
r_P &= P - \alpha_P \cdot P^{(1)}, \\
\beta_P &= \frac{\alpha_P}{254}, \\
P^{(2)} &= \text{clamp}\left(\text{round}\left(\frac{r_P}{\beta_P}\right), -128, 127\right).
\end{align}

The key observation is that $\|P\|_\infty = \max(\exp(S - m)) = 1$ (since the maximum element of $S - m$ is zero), so $\alpha_P = 1/127$ is a \emph{constant} that requires no additional computation. This makes the MSD decomposition of $P$ essentially free in terms of the max-finding step.

\textbf{Step 4: PV with dual GEMM.}
$P$ is decomposed while $V$ remains in INT8. Since $V_{\text{real}} = V_{\text{int8}} \odot s_V$ where $s_V$ is per-channel, we can apply the V scale \emph{after} the GEMM:
\begin{align}
O^{(1)} &= P^{(1)} \cdot V_{\text{int8}} \in \mathbb{R}^{N \times d}, \\
O^{(2)} &= P^{(2)} \cdot V_{\text{int8}} \in \mathbb{R}^{N \times d}, \\
O &= (\alpha_P \cdot O^{(1)} + \beta_P \cdot O^{(2)}) \odot s_V.
\end{align}

Each $P^{(i)} V_{\text{int8}}$ is a native INT8$\times$INT8 GEMM. The per-channel scale $s_V$ is applied element-wise after reconstruction, which is a lightweight Vector operation.

\begin{algorithm}[h]
\caption{MSD Attention (per head)}
\label{alg:msd-attention}
\begin{algorithmic}[1]
\Require $Q \in \mathbb{R}^{N \times d}$ (BF16), $K \in \mathbb{Z}_8^{M \times d}$ (INT8), $V \in \mathbb{Z}_8^{M \times d}$ (INT8), per-channel scales $s_K, s_V \in \mathbb{R}^d$
\Ensure $O \in \mathbb{R}^{N \times d}$ (BF16)
\Statex \textbf{Absorb K scale into Q, then decompose:}
\State $\tilde{Q} \gets Q \odot s_K$ \Comment{Per-channel: $\tilde{Q}_{ik} = Q_{ik} \cdot s_{K,k}$}
\State $(\alpha_Q, \beta_Q, \tilde{Q}^{(1)}, \tilde{Q}^{(2)}) \gets \text{MSD-Decompose}(\tilde{Q})$
\Statex \textbf{$S = \tilde{Q} K^\top$ with dual GEMM:}
\State $S^{(1)} \gets \tilde{Q}^{(1)} \cdot K^\top$ \Comment{INT8$\times$INT8 GEMM}
\State $S^{(2)} \gets \tilde{Q}^{(2)} \cdot K^\top$ \Comment{INT8$\times$INT8 GEMM}
\State $S \gets (\alpha_Q \cdot S^{(1)} + \beta_Q \cdot S^{(2)}) / \sqrt{d}$
\Statex \textbf{Softmax + MSD decompose $P$:}
\State $m \gets \max(S, \text{row-wise})$ \Comment{Online softmax}
\State $P \gets \exp(S - m)$ \Comment{$\|P\|_\infty = 1$}
\State $\alpha_P \gets 1/127$, $\beta_P \gets \alpha_P / 254$
\State $(P^{(1)}, P^{(2)}) \gets \text{MSD-Decompose}(P, \alpha_P, \beta_P)$
\Statex \textbf{$O = P V$ with dual GEMM + V scale:}
\State $O^{(1)} \gets P^{(1)} \cdot V$ \Comment{INT8$\times$INT8 GEMM}
\State $O^{(2)} \gets P^{(2)} \cdot V$ \Comment{INT8$\times$INT8 GEMM}
\State $O \gets (\alpha_P \cdot O^{(1)} + \beta_P \cdot O^{(2)}) \odot s_V$ \Comment{Apply V scale after GEMM}
\State $O \gets O / \ell$ \Comment{$\ell = \sum P$ (softmax normalizer)}
\State \Return $O$
\end{algorithmic}
\end{algorithm}

Algorithm~\ref{alg:msd-attention} summarizes the full MSD attention procedure.

Key observations: (1) K's per-channel scale $s_K$ is absorbed into $Q$ before MSD decomposition, so the $\tilde{Q}^{(i)} K_{\text{int8}}^\top$ GEMMs are pure INT8$\times$INT8. V's per-channel scale $s_V$ is applied after the $P^{(i)} V_{\text{int8}}$ GEMMs. (2) Since $\|P\|_\infty = 1$ (from the softmax max subtraction), $\alpha_P = 1/127$ is a constant---no max computation is needed for P's decomposition.

\subsection{Integration with FlashAttention Tiling}

MSD integrates seamlessly with FlashAttention's tile-based computation:

\begin{enumerate}
    \item \textbf{Load Q tile:} Absorb K's per-channel scale: $\tilde{Q} = Q \odot s_K$. Decompose $\tilde{Q}$ into $\tilde{Q}^{(1)}, \tilde{Q}^{(2)}$ via MSD
    \item \textbf{Compute $S^{(1)}, S^{(2)}$:} Two INT8$\times$INT8 GEMMs: $\tilde{Q}^{(i)} K_{\text{int8}}^\top$, then reconstruct $S$
    \item \textbf{Online softmax + P decomposition:} Compute $P = \exp(S - m)$. Use fixed $\alpha_P = 1/127$ (since $\|P\|_\infty = 1$) to decompose $P$ into INT8
    \item \textbf{Compute $O^{(1)}, O^{(2)}$:} Two INT8$\times$INT8 GEMMs: $P^{(i)} V_{\text{int8}}$, then reconstruct and apply $s_V$
    \item \textbf{Online softmax rescaling:} Update running output with rescaling factors from online softmax
\end{enumerate}

The memory footprint remains $O(N + M)$—the same as standard FlashAttention—since MSD does not require additional storage for intermediate matrices.

\subsection{Complexity Analysis}

Table~\ref{tab:attention-complexity} compares computational costs for the attention case. As established in Section~\ref{sec:analysis}, INT8's $2\times$ throughput advantage makes MSD's doubled GEMM FLOPs have comparable effective Cube time to the dequant baseline, while drastically reducing Vector workload and enabling direct KV access by Cube cores without HBM round-trip.

\begin{table}[h]
\centering
\caption{Attention complexity ($N$ queries, $M$ keys/values, head dimension $d$, tile size $B_c$, $T_c = M/B_c$). INT8 GEMM throughput is $2\times$ BF16, so $8NMd$ INT8 FLOPs $=$ $4NMd$ BF16-equivalent.}
\label{tab:attention-complexity}
\begin{tabular}{lccc}
\toprule
Method & Eff.\ Cube Time & Vector Ops (per KV head) & Dominant Term \\
\midrule
BF16 (baseline) & $4NMd$ & $O(NM)$ & --- \\
INT8 KV + dequant & $4NMd$ & $4Md + 4NM + 3NdT_c$ & $4Md$ (indep.\ of $N$) \\
\textbf{MSD (ours)} & $\mathbf{4NMd}$ & $6Nd + 12NM + 7NdT_c$ & $12NM$ (linear in $N$) \\
\bottomrule
\end{tabular}
\end{table}

\subsubsection{Decode Phase: The Memory-Bound Regime}

In the decode phase of LLM inference, the query length $N$ per attention head is very small, while $M$ (the KV cache length) can be very large. Specifically, in Grouped Query Attention (GQA)~\cite{gqa}, each KV head serves $G$ query heads, so the effective query count per KV head is:
\begin{equation}
N = (1 + N_{\text{spec}}) \times G,
\end{equation}
where $N_{\text{spec}}$ is the number of speculative decoding tokens (typically 1--3) and $G$ is the GQA group size. For example, with $N_{\text{spec}} = 2$ and $G = 4$, we have $N = 12$. Note that $N$ is independent of the system batch size---it is determined solely by the model architecture and decoding strategy.

With $N \ll M$ (e.g., $N = 12$, $M = 8192$), the attention computation is \emph{memory-bound}:

\begin{itemize}
    \item \textbf{Low arithmetic intensity.} The GEMMs $QK^\top$ ($N \times d$ by $M \times d$) and $PV$ ($N \times M$ by $M \times d$) have arithmetic intensity proportional to $N$, which is far below the hardware's compute-to-bandwidth ratio. HBM bandwidth is the bottleneck.
    \item \textbf{Dequantization dominates Vector workload.} In the standard dequant approach, K and V must be converted from INT8 to BF16---costing $2Md$ Vector ops per head. This is \emph{independent of $N$} and must complete before the Cube GEMM can begin, creating a pipeline stall.
    \item \textbf{MSD drastically reduces Vector work.} MSD decomposes $Q$ ($O(Nd)$ ops) and $P$ ($O(NM)$ ops), totaling $O(Nd + NM)$ Vector ops. Since $N \ll d$, the MSD Vector workload $O(Nd)$ is much smaller than the dequant baseline's $O(Md)$---a reduction by a factor of $M/N$ (e.g., $8192/12 \approx 680\times$).
\end{itemize}

The dequant baseline's Vector cost is dominated by K/V dequantization ($4Md$), which is \emph{independent of $N$} (Table~\ref{tab:attention-complexity}). MSD eliminates this term, replacing it with $N$-proportional costs. Table~\ref{tab:decode-vector-examples} shows concrete numbers.

\begin{table}[h]
\centering
\caption{Vector ops (millions) for $d=128$, $M=8192$, $B_c=64$}
\label{tab:decode-vector-examples}
\begin{tabular}{rrrr}
\toprule
$N$ & Dequant & MSD & Ratio \\
\midrule
1 & 4.3M & 0.2M & $20\times$ \\
4 & 4.5M & 0.9M & $5.3\times$ \\
12 & 5.2M & 2.6M & $2.0\times$ \\
24 & 6.2M & 5.1M & $1.2\times$ \\
32 & 6.8M & 6.8M & $1.0\times$ \\
\bottomrule
\end{tabular}
\end{table}

For typical decode configurations ($N \leq 12$), MSD achieves $2$--$20\times$ reduction in Vector workload. The crossover point is approximately $N^* \approx 4Md / (12M + 7dT_c)$, which equals $\sim$20 for $d = 128$ and $\sim$31 for $d = 576$. Notably, MLA-style architectures~\cite{deepseek-v2} use $d = 576$, which raises the crossover and extends MSD's advantage to larger $N$.

\subsubsection{Impact of Growing Query Count}

Recent advances in LLM inference are increasing the effective query count $N$ per KV head during decode:

\begin{itemize}
    \item \textbf{Speculative decoding and Multi-Token Prediction (MTP).} Instead of generating one token at a time, speculative decoding~\cite{speculative-decoding} and MTP~\cite{deepseek-v3} verify multiple candidate tokens simultaneously, increasing $N_{\text{spec}}$ from 1 to 5 or more.
    \item \textbf{Multi-Latent Attention (MLA).} MLA~\cite{deepseek-v2} uses a low-rank latent space with up-projection that can significantly expand the effective number of query heads per KV head, further increasing $N$. However, MLA also increases $d$ (e.g., $d = 576$), which raises the crossover point $N^*$ and extends MSD's favorable regime.
\end{itemize}

Table~\ref{tab:crossover-d} illustrates how larger $d$ benefits MSD under growing $N$.

\begin{table}[h]
\centering
\caption{Dequant/MSD Vector ratio for different $d$ and $N$ ($M=8192$, $B_c=64$)}
\label{tab:crossover-d}
\begin{tabular}{rcccc}
\toprule
& \multicolumn{2}{c}{$d = 128$ (GQA)} & \multicolumn{2}{c}{$d = 576$ (MLA)} \\
\cmidrule(lr){2-3} \cmidrule(lr){4-5}
$N$ & Dequant/MSD ratio & & Dequant/MSD ratio & \\
\midrule
1  & $20.0\times$ & & $31.0\times$ & \\
12 & $2.0\times$ & & $3.0\times$ & \\
32 & $1.0\times$ & ($N^*$) & $1.4\times$ & \\
48 & $0.8\times$ & & $1.0\times$ & ($N^*$) \\
\bottomrule
\end{tabular}
\end{table}

\subsubsection{Optimization Opportunities}

Beyond the baseline analysis, several hardware and algorithmic optimizations can further reduce MSD's Vector overhead and extend its advantageous regime:

\begin{itemize}
    \item \textbf{Low-precision decomposition.} The MSD decomposition (round, clamp) and S/O merging (scale, add) can be performed in FP16 or even INT16 instead of FP32, reducing Vector instruction count and register pressure.
    \item \textbf{Cube-side FixPipe on Ascend.} Ascend's Cube core features a \emph{FixPipe} (fixed-point pipeline) unit that performs inline post-processing on GEMM outputs \emph{before} they leave the Cube. Specifically, FixPipe can: (1) cast INT32 accumulator results to FP16/BF16, (2) multiply by a scalar coefficient, and (3) atomically accumulate into global memory---all in a single pass with no Vector involvement. This maps directly onto MSD's merging step: the two partial GEMMs $W x^{(1)}$ and $W x^{(2)}$ (with INT32 outputs) can each be scaled by $\alpha$ and $\beta$ respectively and accumulated into the final output buffer via FixPipe's atomic add, completely bypassing the Vector core for the reconstruction phase. This effectively reduces MSD's Vector overhead to \emph{only} the decomposition step, making the merging cost zero from the Vector perspective.
\end{itemize}

With these optimizations, the effective Vector cost of MSD can be reduced by 30--50\%, pushing the crossover point $N^*$ significantly higher and making MSD beneficial for an even wider range of decode configurations.

\subsubsection{HBM Bandwidth Utilization on Decoupled Architectures}

On decoupled architectures (e.g., Ascend NPUs), the dequant approach requires the same Vector$\to$HBM$\to$Cube round-trip for K/V as for linear-layer weights (Section~\ref{sec:ascend}), resulting in $5Md$ bytes of HBM traffic per attention head. MSD avoids this round-trip: K and V remain in INT8 and are read \emph{once} directly by Cube cores ($2Md$ bytes total for K+V), a $2.5\times$ reduction.

MSD also avoids the dequantization computation on Vector cores: the dequant approach requires $O(Md)$ Vector FLOPs (independent of $N$) for K/V type conversion and scaling, while MSD replaces this with the much smaller $O(Nd + NM)$ decomposition overhead. This efficient data movement enables MSD to achieve \textbf{over 80\% HBM bandwidth utilization} in GQA decode scenarios on Ascend 910B, compared to 40--50\% for dequant.

\textbf{Extension to other data types.} The above analysis focuses on INT8 (W8A16) as the primary example, but MSD attention applies to other weight formats as well. For MXFP4 (W4A16), the decomposition follows Section~\ref{sec:mxfp4-instantiation} with the same structure: $\alpha_P$ remains a constant after softmax ($\alpha_P = 1/127$ for INT8, $\alpha_P = 1$ for MXFP4 due to the E8M0 power-of-two constraint), so no additional max-reduction is needed. The per-block error bound is $\alpha/64$ (Theorem~\ref{thm:error-mxfp4}), and the effective Cube compute time remains comparable to the single-pass MXFP8 baseline (Section~\ref{sec:mxfp4-instantiation}). The memory-bound decode regime is particularly favorable for MSD regardless of weight format, since HBM traffic reduction from avoiding KV dequantization dominates the compute cost.

\section{Theoretical Analysis}
\label{sec:analysis}

This section provides theoretical foundations for MSD, including error bounds for the multi-scale decomposition and computational complexity analysis.

\subsection{Reconstruction Error Bounds}

We first establish that the two-pass decomposition achieves lower error than single-scale quantization.

\begin{theorem}[Multi-Scale Reconstruction Error]
\label{thm:error}
Let $x \in \mathbb{R}^n$ with $\|x\|_\infty = M$. Under the two-pass decomposition in Algorithm~\ref{alg:msd}, the reconstruction error satisfies:
\begin{equation}
\|x - (\alpha x^{(1)} + \beta x^{(2)})\|_\infty \leq \frac{M}{127 \times 2 \times 254} = \frac{M}{64516} \approx \frac{M}{2^{16}}.
\end{equation}
\end{theorem}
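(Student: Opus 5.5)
Two-stage argument in exact real arithmetic: first bound the Pass 1 residual, then show Pass 2 quantizes that residual without clipping, so only a half-step rounding error remains. The case $M=0$ is trivial if we adopt the convention $x^{(1)}=x^{(2)}=0$, so assume $M>0$.

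\textbf{Pass 1.} With $\alpha = M/127$ from Eq.~\eqref{eq:alpha}, every coordinate satisfies $|x_i/\alpha| \le 127$. Hence $\mathrm{round}(x_i/\alpha)$ already lies in $[-127,127] \subset [-128,127]$, and the clamp in Eq.~\eqref{eq:x1} is inactive. Round-to-nearest then gives $|x_i/\alpha - x^{(1)}_i| \le 1/2$. By Eq.~\eqref{eq:residual} this means $|r_i| \le \alpha/2$ for all $i$, i.e.\ $\|r\|_\infty \le \alpha/2$.

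\textbf{Pass 2.} With $\beta = \alpha/254$ from Eq.~\eqref{eq:beta}, we get $|r_i/\beta| \le 254 \cdot \tfrac12 = 127$. So $\mathrm{round}(r_i/\beta) \in [-127,127]$ and the clamp in Eq.~\eqref{eq:x2} is again inactive. This is the crux and the reason for the factor $254 = 2\times 127$: Pass 2 is designed so that its INT8 range exactly covers the Pass 1 residual range. Rounding once more gives $|r_i/\beta - x^{(2)}_i| \le 1/2$.

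\textbf{Combining.} Since $x_i - \alpha x^{(1)}_i - \beta x^{(2)}_i = r_i - \beta x^{(2)}_i$, the total error per coordinate is at most
\[
\frac{\beta}{2} = \frac{\alpha}{508} = \frac{M}{127\cdot 2\cdot 254} = \frac{M}{64516},
\]
and $64516 \approx 2^{16} = 65536$ gives the stated approximation. Taking the maximum over $i$ yields the $\ell_\infty$ bound.

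\textbf{Main obstacle.} The only real subtlety is whether the clamps are truly inactive, together with boundary behaviour at $|r_i/\beta| = 127$ exactly. In exact arithmetic the bounds above settle both, since $127$ is representable and the clamp only bites beyond $[-128,127]$. If one insisted on modelling BF16 rounding of $x/\alpha$ or of $r$, extra terms would appear. The plan is therefore to state the theorem in exact arithmetic, consistent with how the statement is phrased, rather than attempt a floating-point analysis.
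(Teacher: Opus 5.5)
Your proof is correct and follows the same route as the paper: bound the Pass~1 residual by $\alpha/2$, then bound the Pass~2 rounding error by $\beta/2 = \alpha/508 = M/64516$. Your explicit check that both clamps are inactive ($|x_i/\alpha|\le 127$ and $|r_i/\beta|\le 127$), together with the $M=0$ case, fills in steps the paper leaves implicit without changing the argument.
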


\begin{proof}
After the first quantization pass (Eqs.~\eqref{eq:alpha}--\eqref{eq:x1}), the per-element rounding error is bounded by:
\begin{equation}
|x_i - \alpha x_i^{(1)}| \leq \frac{\alpha}{2} = \frac{M}{2 \times 127}.
\end{equation}
Therefore, the residual satisfies $\|r\|_\infty \leq M / 254$. Since the quantization error in Pass 1 is bounded in $(-0.5\alpha, 0.5\alpha)$, we directly use $\beta = \alpha / 254$ as the secondary scale without computing max. The second-pass rounding error satisfies:
\begin{equation}
|r_i - \beta x_i^{(2)}| \leq \frac{\beta}{2} = \frac{\alpha}{2 \times 254} = \frac{M}{127 \times 2 \times 254} = \frac{M}{64516} \approx \frac{M}{2^{16}}.
\end{equation}
Since $r_i = x_i - \alpha x_i^{(1)}$, we have $|x_i - (\alpha x_i^{(1)} + \beta x_i^{(2)})| \leq M / 64516 \approx M / 2^{16}$ for all $i$, establishing the bound.
\end{proof}

\begin{corollary}[Effective Precision Gain]
Standard single-scale INT8 quantization achieves error bound $M / 254 \approx M / 2^8$. MSD with $K=2$ achieves $M / 64516 \approx M / 2^{16}$, providing approximately \textbf{8 additional effective bits} of precision (from $\sim$8 to $\sim$16 effective bits). With fractional scaling ($\alpha = M/127.49$, Section~\ref{sec:optimization}), the bound tightens to $M/65015$, which is closer to $2^{16}$.
\end{corollary}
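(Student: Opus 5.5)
The plan is to read the Corollary as three separate quantitative claims and reduce each to a rounding bound of the kind used in Theorem~\ref{thm:error}. First, for the single-scale baseline with $\alpha = M/127$, I would note that $|x_i/\alpha| \le 127$. Hence $\text{round}(x_i/\alpha) \in [-127,127]$, the clamp in Eq.~\eqref{eq:x1} never activates, and the only error is rounding. That error is at most $\alpha/2 = M/254$.

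Second, the bound $M/64516$ is exactly Theorem~\ref{thm:error}, which I would simply invoke. I would add one check that the proof there uses implicitly: the second clamp is also inactive. Since $\|r\|_\infty \le \alpha/2$ and $\beta = \alpha/254$, we get $|r_i/\beta| \le 127$, so $x^{(2)}_i = \text{round}(r_i/\beta)$ exactly and $|r_i - \beta x^{(2)}_i| \le \beta/2$.

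To turn error bounds into "effective bits", I would define the effective precision of a scheme with worst-case error $\varepsilon$ relative to range $M$ as $\log_2(M/\varepsilon)$. This convention makes single-pass INT8 equal to $\log_2 254 \approx 7.99$ bits. For MSD it gives $\log_2 64516 = 2\log_2 254 \approx 15.98$ bits. The gain is therefore $\log_2 254 \approx 7.99$, i.e.\ approximately 8 additional bits. This is the identity $64516 = 254^2$, reflecting that each pass contributes one factor of $254$.

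Third, for fractional scaling I would repeat the argument with $\alpha = M/127.49$ and $\beta = \alpha/254.98$, in three steps:
\begin{enumerate}
\item Now $|x_i/\alpha| \le 127.49$, which still rounds into $[-127,127]$, so no clamping occurs and $\|r\|_\infty \le \alpha/2$.
\item Then $|r_i/\beta| \le 254.98/2 = 127.49$, which again rounds within range.
\item The final error is therefore at most $\beta/2 = M/(127.49\cdot 254.98\cdot 2) \approx M/65015$, and $\log_2 65015 \approx 15.99$ is closer to $16$.
\end{enumerate}

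The main obstacle I expect is being careful here. The text motivates this case by saying the extremal element's residual is $0.49\alpha$. However, non-extremal elements can still incur rounding error up to $\alpha/2$. So the bound should be derived from the uniform $\alpha/2$ rounding bound together with the no-clamp property, not from a "$0.49\alpha$" residual. The $0.49$ only matters for guaranteeing that the extremal values $\pm 127.49$ do not round past $\pm 127$. Once that is verified, the computation is routine arithmetic.
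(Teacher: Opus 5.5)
Your proposal is correct and follows essentially the paper's route. The corollary comes straight from the two-pass rounding argument of Theorem~\ref{thm:error}, and the fractional-scaling bound $M/(127.49\times 254.98\times 2)\approx M/65015$ is simply $\beta/2$ with the modified constants. Your explicit no-clamp checks ($127.49<127.5$ in both passes) fill in what the paper leaves implicit. Your caveat is also justified: the paper's claim that the worst-case residual tightens to $0.49\alpha$ is false for non-extremal elements, but its final bound never uses that claim.
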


\begin{theorem}[MXFP4 Multi-Scale Reconstruction Error]
\label{thm:error-mxfp4}
Let $x \in \mathbb{R}^{32}$ be a 32-element block with $\|x\|_\infty = M_b$. Under the two-pass MXFP4 decomposition with $\alpha = 2^{\lceil \log_2(M_b / 1.859375) \rceil}$ and $\beta = \alpha/16$, the reconstruction error satisfies:
\begin{equation}
\|x - (\alpha q_1 + \beta q_2)\|_\infty \leq \frac{\alpha}{64}.
\end{equation}
\end{theorem}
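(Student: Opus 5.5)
The plan is to track the scaled quantities $s = x/\alpha$ and $t = r/\beta$ elementwise, use the E1M2 grid structure (uniform step $0.25$ on $[-1.75, 1.75]$, round-to-nearest, saturation at $\pm 1.75$), and bound the rounding error in each pass separately, treating the in-range and clipped regimes as distinct cases. Arithmetic is assumed exact apart from the explicit FP4 rounding; since $\alpha$ and $\beta$ are powers of two, the scalings themselves add no error. The degenerate block $M_b = 0$ is handled separately: then $x = 0$ and the error is zero.

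\textbf{Step 1 (range of $s$).} From $\alpha = 2^{\lceil \log_2(M_b/1.859375)\rceil} \ge M_b/1.859375$ we get $|s_i| = |x_i|/\alpha \le 1.859375$ for every $i$.

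\textbf{Step 2 (Pass 1 residual, aiming for $\|r\|_\infty \le \alpha/8$).} Split on $|s_i|$.
\begin{itemize}
\item If $|s_i| \le 1.75$, round-to-nearest on a uniform grid of step $0.25$ gives $|s_i - q_{1,i}| \le 0.125$.
\item If $1.75 < |s_i| \le 1.859375$, then $q_{1,i} = \pm 1.75$ and $|s_i - q_{1,i}| \le 0.109375 = 7/64 < 0.125$. This is exactly where the constant $1.859375 = 1.75 \cdot 17/16$ matters: the overshoot $1.75/16$ stays below half a grid step.
\end{itemize}
Multiplying by $\alpha$ gives $|r_i| = \alpha|s_i - q_{1,i}| \le \alpha/8$.

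\textbf{Step 3 (Pass 2).} With $\beta = \alpha/16$, we have $t_i = r_i/\beta = 16 r_i/\alpha$, so $|t_i| \le 2$. Again split into two cases.
\begin{itemize}
\item If $|t_i| \le 1.75$, then $|t_i - q_{2,i}| \le 0.125$.
\item If $1.75 < |t_i| \le 2$, the value saturates to $\pm 1.75$ and $|t_i - q_{2,i}| \le 0.25$.
\end{itemize}
In either case $|t_i - q_{2,i}| \le 0.25$. The final error is
\[
|x_i - \alpha q_{1,i} - \beta q_{2,i}| = |r_i - \beta q_{2,i}| = \beta\,|t_i - q_{2,i}| \le \alpha/64 .
\]
This also yields the refined statement quoted in Section~\ref{sec:mxfp4-instantiation}: the bound is $\alpha/128$ for unclipped elements and $\alpha/64$ for clipped ones.

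\textbf{Main obstacle.} The main care point is Step 2's clipped case. The naive argument "rounding error $\le$ half step" fails once $|s_i|$ exceeds the representable range. One must check that the relaxation factor $17/16$ keeps the overflow within $0.125$, so that the residual bound $\alpha/8$, and hence $|t_i| \le 2$, still holds. A secondary subtlety is confirming that the saturation in Pass 2 costs at most one full grid step, not more. That holds precisely because $|t_i| \le 2 = 1.75 + 0.25$.
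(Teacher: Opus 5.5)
Your proposal is correct and follows the paper's proof essentially step for step: you bound $|x_i/\alpha| \le 1.859375$, split Pass 1 into in-range and clipped cases to get $\|r\|_\infty \le \alpha/8$ (using $0.109375 < 0.125$), then use $|r_i/\beta| \le 2$ and split Pass 2 to get error at most $0.25\beta = \alpha/64$. Your explicit handling of the degenerate block $M_b = 0$ (where $\log_2 M_b$ is undefined) and your stated exact-arithmetic assumption are small clarifications that the paper leaves implicit.
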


\begin{proof}
The FP4 E1M2 format represents positive values in $\{0, 0.25, 0.5, 0.75, 1.0, 1.25, 1.5, 1.75\}$ with a \emph{uniform} step size of 0.25 (including the transition from 1.0 to 1.5, which is 0.5$= 2 \times 0.25$, since the exponent increment doubles the step).

\textbf{Pass 1 residual bound.} After scaling by $\alpha$, the elements satisfy $|x_i / \alpha| \leq 1.859375$. We consider two cases:
\begin{itemize}
    \item \emph{Normal quantization} ($|x_i / \alpha| \leq 1.75$): the rounding error is at most half the step size: $|x_i / \alpha - q_{1,i}| \leq 0.125$, so $|r_i| \leq 0.125\alpha = \alpha/8$.
    \item \emph{Clipped elements} ($|x_i / \alpha| \in (1.75, 1.859375]$): the value is mapped to $\pm 1.75$ via round-to-nearest. The residual satisfies $|r_i| = |x_i / \alpha - 1.75| \cdot \alpha \leq (1.859375 - 1.75)\alpha = 0.109375\alpha < \alpha/8$.
\end{itemize}
Therefore, the global Pass 1 residual bound is $\|r\|_\infty \leq \alpha/8$.

\textbf{Pass 2 error bound.} With $\beta = \alpha/16$, the scaled residual satisfies $|r_i / \beta| \leq (\alpha/8) / (\alpha/16) = 2$. Again two cases:
\begin{itemize}
    \item \emph{Normal quantization} ($|r_i / \beta| \leq 1.75$, approximately 87.5\% of elements): rounding error $\leq 0.125\beta = \alpha/128$.
    \item \emph{Clipped elements} ($|r_i / \beta| \in (1.75, 2]$, approximately 12.5\% of elements): residual mapped to $\pm 1.75$, error $\leq (2 - 1.75)\beta = 0.25\beta = \alpha/64$.
\end{itemize}
The worst-case per-element error is therefore $\alpha/64$, establishing the bound.
\end{proof}

\begin{corollary}[Effective Precision across Formats]
\label{cor:precision}
The effective precision of MSD decomposition depends on the weight format:
\begin{itemize}
    \item \textbf{INT8 (W8A16):} Standard single-pass INT8 quantization achieves $\sim$8 effective bits. MSD with $K=2$ achieves $M/64516 \approx M/2^{16}$, providing $\sim$16 effective bits---an 8-bit gain.
    \item \textbf{MXFP4 (W4A16):} Standard single-pass MXFP4 quantization achieves $\sim$2.8 effective bits. MSD with $K=2$ achieves error bound $\alpha/64$ per block; since $\alpha / M_b \leq 1.859375$, the relative error is at most $1.859375/64 \approx 0.029$, yielding $\sim$6.6 effective bits---a 3.8-bit gain over single-pass MXFP4 and 1.4 bits beyond single-pass MXFP8 ($\sim$5.24 bits).
\end{itemize}
\end{corollary}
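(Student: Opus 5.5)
The corollary collects consequences of Theorems~\ref{thm:error} and~\ref{thm:error-mxfp4}. The proof therefore depends mainly on fixing one convention for ``effective bits'' and applying it to each bound. I take effective bits to be $b_{\mathrm{eff}} = \log_2(M/\varepsilon)$, where $\varepsilon$ is the reconstruction error and $M$ is the dynamic range: $\|x\|_\infty$ for INT8, and $M_b$ per block for MX. I will use this for the worst-case ($L_\infty$) bounds. For the MX figures I will also record the RMS version, $\log_2(M/\varepsilon_{\mathrm{rms}})$, because the quoted values $5.24$ and $6.6$ look like averaged rather than worst-case quantities (compare the ``Eff.\ Bits'' column of Table~\ref{tab:mxfp4-evolution}).

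\textbf{INT8.} A single pass with $\alpha = M/127$ has per-element error at most $\alpha/2 = M/254$, by the first step of the proof of Theorem~\ref{thm:error}. This gives $\log_2 254 \approx 7.99$ effective bits. Theorem~\ref{thm:error} gives $\varepsilon \le M/64516$, so $b_{\mathrm{eff}} \ge \log_2 64516 \approx 15.98$. The gain is $\log_2 254 \approx 8$ bits, which comes exactly from the factor $\beta = \alpha/254$. This item is routine.

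\textbf{MXFP4.} Theorem~\ref{thm:error-mxfp4} gives $\varepsilon \le \alpha/64$, so the key quantity is $\alpha/M_b$. I would bound it from both sides using the ceiling in \eqref{eq:alpha-mxfp4}:
\[
\frac{M_b}{1.859375} \le \alpha < \frac{2M_b}{1.859375}.
\]
This is the step I expect to be the main obstacle. The corollary's phrasing, ``$\alpha/M_b \le 1.859375$'', does not follow from the definition; what the ceiling actually gives is $\alpha/M_b < 2/1.859375 \approx 1.076$.

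With the corrected bound, the worst-case relative error is $\varepsilon/M_b < 1/(32 \cdot 1.859375) \approx 0.0168$. That is about $5.9$ guaranteed effective bits, not $6.6$.

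I would then recover the $\sim 6.6$ figure under an RMS model. Take the $87.5\%$ unclipped Pass~2 elements to carry uniform rounding error of width $\beta/4$, which gives RMS $\beta/(4\sqrt{12})$. Add the clipped $12.5\%$ tail with error uniform in $[0, \alpha/64]$. Average $\alpha/M_b$ over the scale-phase distribution, which lies in $[1/1.859375,\, 2/1.859375)$. The goal is to check that this yields about $6.6$ bits, consistent with the simulated $6.65$.

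For the baselines I would apply the identical convention:
\begin{itemize}
\item single-pass MXFP4, with step $0.25\alpha$ and $\alpha \approx M_b/1.75$, giving $\approx 2.8$ bits;
\item single-pass MXFP8 (E4M3), giving $\approx 5.24$ bits.
\end{itemize}
The stated gains of $3.8$ and $1.4$ bits are then differences of like quantities.

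If the RMS calculation does not reproduce $6.6$, I will state the corollary in two forms. The rigorous form is a worst-case guarantee of about $5.9$ bits from Theorem~\ref{thm:error-mxfp4}. The $6.6$ figure would be cited as the empirical effective precision from Table~\ref{tab:mxfp4-evolution}, rather than as a consequence of the $\alpha/64$ bound.
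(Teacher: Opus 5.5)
Your INT8 item matches the paper: both read $\log_2 254 \approx 8$ and $\log_2 64516 \approx 16$ directly off the $L_\infty$ bounds. For MXFP4 your route genuinely differs from the paper's, and yours is the sounder one.

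The paper's only justification is the chain inside the statement: $\alpha/M_b \le 1.859375$, hence relative error $\le 1.859375/64 \approx 0.029$, hence $\sim$6.6 bits. That chain does not close, because $-\log_2 0.029 \approx 5.1$. One small correction to your wording: $\alpha/M_b \le 1.859375$ does follow, since your $\alpha/M_b < 2/1.859375$ implies it. It is just a uselessly loose transposition of $M_b/\alpha \le 1.859375$, which is the inequality the proof of Theorem~\ref{thm:error-mxfp4} actually uses.

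Your sharp bound gives worst-case relative error below $1/59.5$, about $5.9$ bits. This bound is approached: take $M_b/\alpha$ just above $0.93$ and an element whose Pass~1 residual is $\pm\alpha/8$, so that Pass~2 clips it by exactly $\alpha/64$. Hence no worst-case argument can deliver $6.6$. The quoted $6.6$ and $5.24$ are the empirical metric $-\log_2(\|x - x_{\text{quant}}\|_2/\|x\|_2)$ of Section~\ref{sec:exp-mxfp4}, with L2 errors $\approx 0.0103$ and $0.0265$. Your two-tier conclusion is therefore the correct reading of the corollary: a provable $\approx 5.9$-bit guarantee plus a distribution-dependent empirical figure.

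The one step that fails as written is your RMS check. With your normalization $\log_2(M_b/\varepsilon_{\mathrm{rms}})$, your error model gives $\varepsilon_{\mathrm{rms}} \approx 0.0846\beta \approx 0.00529\alpha$. A uniform scale phase gives $\mathbb{E}[(\alpha/M_b)^2] = 3/(\ln 4 \cdot 1.859375^2) \approx 0.63$. Together these yield about $7.9$ bits, not $6.6$.

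The paper's metric divides by the block RMS $\|x_b\|_2/\sqrt{32}$ rather than by $M_b$. This subtracts $\log_2$ of the crest factor $M_b/x_{\mathrm{rms}}$, which is about $2.3$--$2.4$ for 32 Gaussian samples, i.e.\ roughly $1.2$ bits. You also need the actual distribution of $\alpha$ rather than a uniform phase; for $\mathcal{N}(0,1)$, about $88\%$ of blocks get $\alpha = 2$ and the rest get $\alpha = 1$. With both corrections, your model reproduces Table~\ref{tab:mxfp4-decomp} to within about $0.02$ bits: $6.63$ versus $6.62$ for $\mathcal{N}(0,1)$, and $6.83$, $7.35$ versus $6.83$, $7.36$ for the two uniform rows. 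This makes explicit that $6.6$ is a property of Gaussian-like blocks, not a consequence of Theorem~\ref{thm:error-mxfp4}.

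Finally, be careful with ``identical convention'' for the gains. For RMS-type errors the crest factor cancels, so the $3.8$- and $1.4$-bit differences essentially survive either normalization. Under your worst-case convention, however, single-pass MXFP4 with $\alpha = 2^{\lceil \log_2(M_b/1.75)\rceil}$ has relative error below $0.25/1.75 = 1/7$, about $2.8$ bits. The worst-case gain of MSD over it is then only $\log_2(59.5/7) \approx 3.1$ bits.
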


For comparison, BF16 has 7 explicit mantissa bits plus implicit leading 1, giving roughly 8 effective bits of precision for normalized numbers. MSD's two-pass INT8 decomposition approaches BF16 fidelity while using only INT8 operations throughout the compute-intensive GEMM. The MXFP4 variant, while lower in absolute precision, surpasses single-pass MXFP8---a key result for W4A16 inference where activation quantization must compete with 8-bit weight formats.

\subsection{Computational Cost Analysis}

Table~\ref{tab:cost} compares the theoretical costs of different approaches for a single linear layer with $x \in \mathbb{R}^n$ and $W \in \mathbb{R}^{m \times n}$.

\begin{table}[h]
\centering
\caption{Theoretical cost comparison per linear layer}
\label{tab:cost}
\small 
\setlength{\tabcolsep}{3pt} 
\begin{tabular}{lccc}
\toprule
Method & GEMM FLOPs & Vector FLOPs & HBM Traffic \\
\midrule
BF16$\times$BF16 (baseline) & $2bmn$ & $0$ & $2mn+2bn+2bm$ \\
BF16$\times$INT8 (dequant) & $2bmn$ & $2mn$ (type conv) & $3mn+2bn+2bm$ \\
\textbf{MSD-INT8 (ours)} & $4bmn$ & $O(bn+bm)$ & $mn+4bn+2bm \sim 2mn+4bn+2bm$ \\
MXFP8$\times$MXFP4 (baseline) & $2bmn$ & $O(mn)$ (type conv) & $3mn+2bn+2bm$ \\
\textbf{MSD-MXFP4 (ours)} & $4bmn$ & $O(bn+bm)$ & $mn+4bn+2bm \sim 2mn+4bn+2bm$ \\
\bottomrule
\end{tabular}
\end{table}

MSD doubles the raw GEMM FLOPs but removes weight/KV dequantization from the GEMM critical path. Critically, since INT8$\times$INT8 GEMM runs at $2\times$ the throughput of BF16 on modern tensor cores (Ascend Cube cores, NVIDIA Tensor Cores), the effective Cube compute time is \textbf{comparable}---the doubled FLOPs are largely absorbed by the doubled throughput. The net effect is that tensor cores perform a similar amount of work in comparable wall-clock time, while scalar/vector units are freed from the expensive dequantization overhead. For the MXFP4 variant, two FP4$\times$FP4 GEMMs at $4\times$ BF16 throughput yield the same effective compute time as one FP8$\times$FP8 GEMM at $2\times$ throughput.

\textbf{HBM Traffic Reduction.} The most significant benefit is the reduction in HBM read/write traffic. On decoupled architectures where tensor-scalar communication passes through HBM (e.g., Ascend 910B), the traffic reduction depends on the kernel execution model. Under the \emph{resident-tile fused-kernel} model (Section~\ref{sec:hardware}), where the weight/KV tile remains on-chip across both MSD passes, MSD traffic is $mn + 4bn + 2bm$ bytes (weight read once + activation read twice + output write); since $b \ll m,n$, this is dominated by $mn$. Compared to the dequant baseline's $3mn + 2bn + 2bm \approx 3mn$ bytes (dominated by the weight round-trip), MSD achieves a reduction of up to $\sim 3\times$ in the dominant term. In the \emph{conservative two-read} model (when tiles cannot remain resident), MSD traffic is $2mn + 4bn + 2bm \approx 2mn$, still a $\sim 1.5\times$ reduction over dequant in the dominant term, while fully eliminating the dequantization round-trip. For attention decode with small KV tiles (e.g., $B_c = 64$, $d = 128$: 8\,KB per tile), the resident-tile condition is easily satisfied (see Section~\ref{sec:attention}).

\textbf{Dequantization Computation Avoidance.} Beyond the HBM traffic savings, MSD avoids the Vector-side dequantization computation for weights. In the dequant approach, converting an $m \times n$ INT8 weight matrix to BF16 requires $mn$ type conversions plus $mn$ per-channel scale multiplications---totaling $O(mn)$ Vector FLOPs that are on the same order as the GEMM itself. MSD replaces this with only $O(n+m)$ Vector FLOPs (decomposition and reconstruction), a reduction by a factor of $\sim mn/(n+m)$.

\subsection{End-to-End Latency Model}

We model the end-to-end latency $T$ of a linear layer as:
\begin{equation}
T = \max(T_{\text{vector}}, T_{\text{cube}}) + T_{\text{sync}},
\end{equation}
where $T_{\text{vector}}$ is Vector core time, $T_{\text{cube}}$ is Cube core time, and $T_{\text{sync}}$ is synchronization overhead.

For dequantization-based approaches:
\begin{align}
T_{\text{vector}}^{\text{dequant}} &= \frac{mn}{R_{\text{vector}}} \quad \text{(INT8$\to$BF16 type conversion + scaling)}, \\
T_{\text{cube}}^{\text{dequant}} &= \frac{2mn}{R_{\text{gemm,bf16}}},
\end{align}
where $R_{\text{vector}}$ is scalar/vector core throughput for dequantization. On decoupled architectures (e.g., Ascend 910B), $R_{\text{vector}} \ll R_{\text{gemm,bf16}}$ and the two units communicate through HBM, so the overall latency is dominated by dequantization plus the HBM round-trip.

For MSD-INT8:
\begin{align}
T_{\text{vector}}^{\text{msd-int8}} &= \frac{3n}{R_{\text{vector}}} + \frac{2m}{R_{\text{vector}}}, \\
T_{\text{cube}}^{\text{msd-int8}} &= \frac{4mn}{R_{\text{gemm,int8}}},
\end{align}
where the Vector work (decomposition and reconstruction) is $O(n+m)$ and negligible compared to the $O(mn)$ GEMM work. Since $R_{\text{gemm,int8}} \approx 2 R_{\text{gemm,bf16}}$, the effective Cube time $4mn / R_{\text{gemm,int8}} \approx 2mn / R_{\text{gemm,bf16}}$ is comparable to the dequant baseline.

For MSD-MXFP4 (W4A16), the two FP4$\times$FP4 GEMMs run at $4\times$ BF16 throughput:
\begin{align}
T_{\text{cube}}^{\text{msd-mxfp4}} &= \frac{4mn}{R_{\text{gemm,fp4}}} = \frac{4mn}{4 R_{\text{gemm,bf16}}} = \frac{mn}{R_{\text{gemm,bf16}}},
\end{align}
which equals the single FP8$\times$FP8 GEMM time at $2\times$ throughput. The effective Cube compute time is therefore the same for MSD-MXFP4 and MXFP8 baselines, under the assumption of sufficient tensor-core utilization and fused-kernel execution.

With proper pipelining and fused tiled execution, $T_{\text{sync}} \approx 0$ and the latency approaches the theoretical Cube-bound minimum.

\section{Numerical Experiments}
\label{sec:experiments}

We validate that MSD does not degrade accuracy compared to dequantization-based baselines through numerical simulations, and observe that in many settings MSD achieves lower numerical error. All experiments are conducted in NumPy/PyTorch with FP32 ground truth, simulating the precision behavior of hardware compute pipelines.

\subsection{Experimental Setup}

\textbf{Simulation methodology.} We simulate the numerical behavior of three approaches:
\begin{itemize}
    \item \textbf{Dequant (baseline):} INT8 weights are dequantized to BF16 via per-channel scale, then multiplied with BF16 activations via BF16$\times$BF16 GEMM (with FP32 accumulation, as implemented on hardware). The BF16 truncation of inputs is simulated by masking the lower 16 mantissa bits of FP32 values.
    \item \textbf{MSD (ours):} BF16 activations are decomposed into two INT8 components via the two-pass algorithm (Algorithm~\ref{alg:msd}), then multiplied with INT8 weights via INT8$\times$INT8 GEMM (with INT32 accumulation). Partial results are reconstructed in FP32.
    \item \textbf{Ground truth:} Full FP32 computation with no quantization or truncation.
\end{itemize}

\textbf{On the fairness of comparison.} Both methods use the same accumulation precision (FP32 / INT32, which are equivalent in terms of dynamic range for the sizes considered). The accuracy difference arises from the \emph{input} precision of each GEMM multiply: in BF16$\times$BF16 GEMM, each input operand has only 7 mantissa bits, introducing relative rounding error of $\sim 2^{-7}$ per element; in INT8$\times$INT8 GEMM, each 8-bit$\times$8-bit product is \emph{exact} (the 16-bit result fits in INT32 with no rounding). This is not an artifact of the simulation---it reflects the fundamental hardware reality. On modern accelerators (Ascend, NVIDIA), 16-bit GEMM is the standard path for BF16 computation; using FP32$\times$FP32 GEMM would halve throughput and is never done in practice. Thus the BF16 input truncation error is an inherent cost of the dequant approach, and MSD's use of exact integer arithmetic combined with multi-scale decomposition can lead to lower numerical error in many settings.

\textbf{Data generation.} Weight matrices $W \in \mathbb{Z}_8^{m \times n}$ are generated as random INT8 values with per-channel scales $s_W$ drawn uniformly from $[0.01, 1.0]$. Activation vectors $x \in \mathbb{R}^n$ are generated from various distributions (Gaussian, Uniform, Laplacian, etc.) and stored in simulated BF16 format.

\textbf{Metrics.} We report:
\begin{itemize}
    \item \textbf{L2 relative error:} $\|y - y_{\text{ref}}\|_2 / \|y_{\text{ref}}\|_2$, where $y_{\text{ref}}$ is the FP32 ground truth.
    \item \textbf{Error distribution:} Fraction of output elements whose pointwise relative error $|y_i - y_{\text{ref},i}| / |y_{\text{ref},i}|$ exceeds various thresholds.
\end{itemize}

\subsection{GEMM Accuracy}

Table~\ref{tab:error-dist} shows the error distribution for a $4096 \times 4096$ GEMM with INT8 weights and per-channel scales.

\begin{table}[h]
\centering
\caption{Error distribution: fraction of elements exceeding relative error threshold ($4096 \times 4096$ GEMM, INT8 weight with per-channel scale). L2 relative error shown for reference.}
\label{tab:error-dist}
\begin{tabular}{lccccc}
\toprule
Method & L2 Rel. Error & $>0.1\%$ & $>0.5\%$ & $>1\%$ & $>5\%$ \\
\midrule
Dequant (baseline) & $0.60\%$ & $95.8\%$ & $63.5\%$ & $21.6\%$ & $3.0\%$ \\
\textbf{MSD (ours)} & $\mathbf{0.003\%}$ & $\mathbf{1.5}\%$ & $\mathbf{0.2}\%$ & $\mathbf{0.1}\%$ & $\mathbf{0.0}\%$ \\
\bottomrule
\end{tabular}
\end{table}

\subsubsection{Ablation: Single-Scale vs. Two-Scale Decomposition}

To isolate the contribution of the second pass, we compare three variants:
\begin{itemize}
    \item \textbf{Single-Scale (K=1):} Only the coarse-scale INT8 quantization (equivalent to standard per-tensor scale quantization).
    \item \textbf{Dequant (baseline):} INT8 weights dequantized to BF16, then BF16 matmul.
    \item \textbf{MSD (K=2):} Full two-pass decomposition.
\end{itemize}

\begin{table}[h]
\centering
\caption{Ablation study: effect of decomposition depth ($4096 \times 4096$ GEMM, Gaussian activation, INT8 weight with per-channel scale).}
\label{tab:ablation}
\begin{tabular}{lccccc}
\toprule
Method & L2 Rel. Error & $>0.1\%$ & $>0.5\%$ & $>1\%$ & $>5\%$ \\
\midrule
Single-Scale (K=1) & $0.65\%$ & $92.4\%$ & $58.1\%$ & $23.7\%$ & $3.5\%$ \\
Dequant (BF16) & $0.60\%$ & $95.8\%$ & $63.5\%$ & $21.6\%$ & $3.0\%$ \\
\textbf{MSD (K=2)} & $\mathbf{0.003\%}$ & $\mathbf{1.5}\%$ & $\mathbf{0.2}\%$ & $\mathbf{0.1}\%$ & $\mathbf{0.0}\%$ \\
\bottomrule
\end{tabular}
\end{table}

The single-scale (K=1) result is comparable to the BF16 dequant baseline, both limited to $\sim$8-bit effective precision. Adding the second residual pass (K=2) yields a substantial reduction in L2 error, confirming that the residual decomposition is the key mechanism---not simply the use of INT8 arithmetic.

\subsubsection{Accuracy vs. Matrix Size}

We verify that MSD's precision advantage holds across matrix dimensions.

\begin{table}[h]
\centering
\caption{L2 relative error vs. matrix size (small batch, Gaussian activation, INT8 weight with per-channel scale).}
\label{tab:error-vs-size}
\begin{tabular}{lccc}
\toprule
Size & Dequant & MSD (K=2) & Improvement \\
\midrule
$512 \times 512$ & $1.20\%$ & $0.006\%$ & $200\times$ \\
$1024 \times 1024$ & $0.85\%$ & $0.004\%$ & $213\times$ \\
$2048 \times 2048$ & $0.72\%$ & $0.003\%$ & $240\times$ \\
$4096 \times 4096$ & $0.60\%$ & $0.003\%$ & $200\times$ \\
\bottomrule
\end{tabular}
\end{table}

MSD's L2 error remains well below the dequant baseline across all tested sizes, with no degradation at larger dimensions.

\subsubsection{Summary}

MSD does not degrade accuracy compared to dequantization---in fact, only $1.5\%$ of elements exceed $0.1\%$ relative error with MSD, compared to $95.8\%$ for dequantization. This is a consequence of MSD's two-scale decomposition achieving $\sim$16-bit effective precision (Theorem~\ref{thm:error}), while BF16 dequantization is limited to 7-bit mantissa precision.

\begin{figure}[h]
\centering
\includegraphics[width=0.8\textwidth]{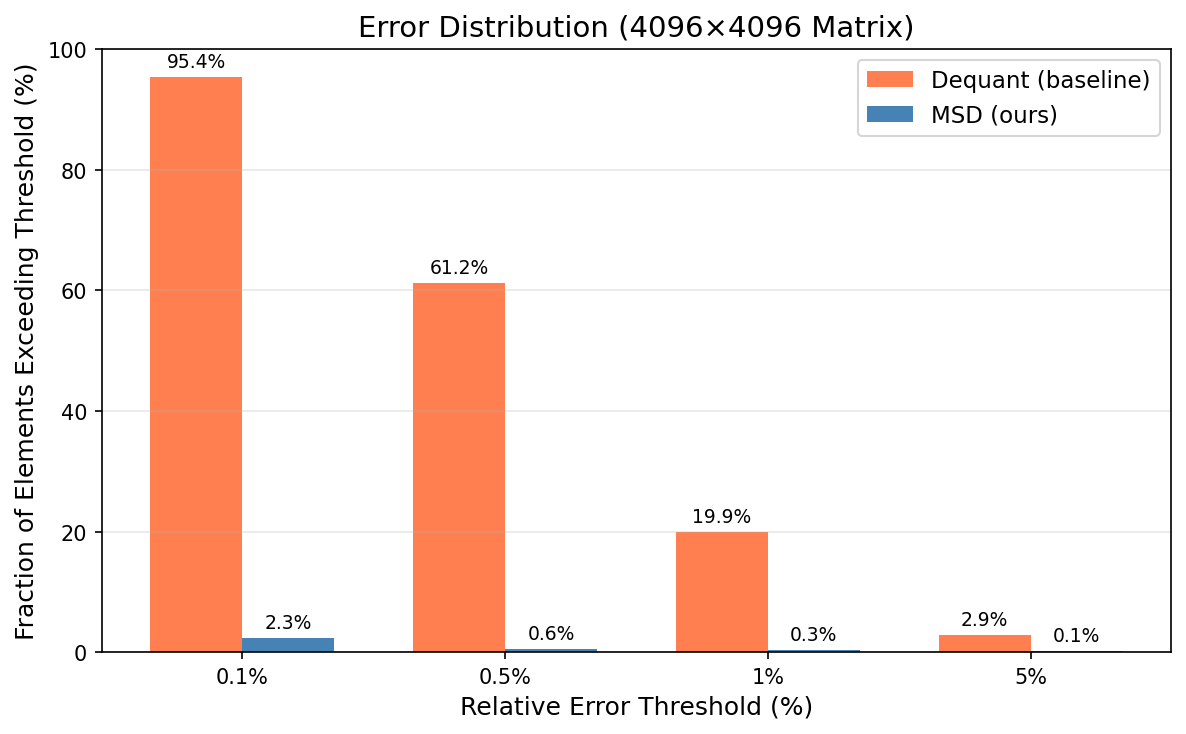}
\caption{Error distribution comparison: fraction of elements exceeding relative error threshold.}
\label{fig:error-dist}
\end{figure}

Figure~\ref{fig:error-dist} visualizes this comparison. The dequantization approach suffers from systematic truncation error due to BF16's 7-bit mantissa, while MSD's two-scale decomposition maintains $\sim$16-bit effective precision throughout.

\subsection{Accuracy Across Activation Distributions}

We evaluate MSD accuracy across various activation distributions to verify robustness. Figure~\ref{fig:error-distribution} shows results for Gaussian, Uniform, Laplacian, Exponential, and mixed distributions with outliers.

\begin{figure}[h]
\centering
\includegraphics[width=0.9\textwidth]{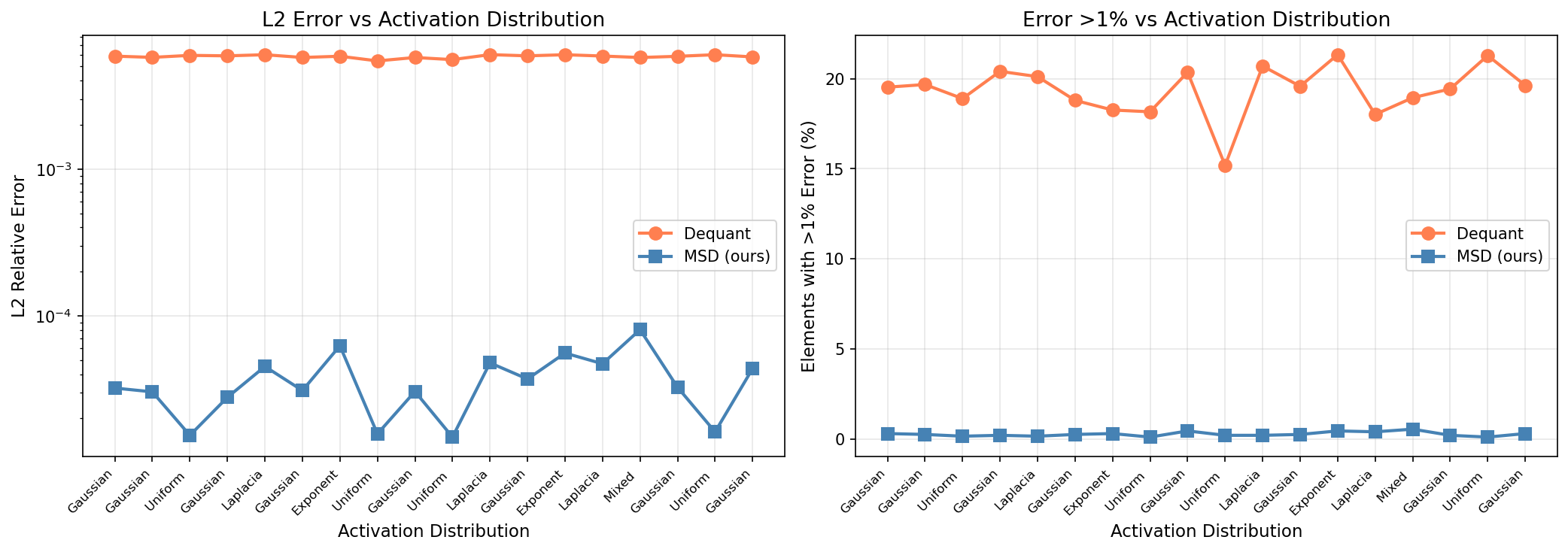}
\caption{L2 relative error vs activation distribution. Left: L2 relative error. Right: fraction of elements with $>1\%$ relative error.}
\label{fig:error-distribution}
\end{figure}

MSD does not degrade accuracy compared to dequantization across all tested distributions, and achieves $\sim$10$\times$ lower L2 error on average. The advantage is particularly strong on Uniform distributions where BF16 truncation creates systematic bias.

\subsection{Flash Attention Accuracy}

We evaluate the accuracy of MSD-enhanced Flash Attention against standard dequantization-based approaches. All methods take $Q$ (BF16) and $K, V$ (INT8 with per-channel scale) as inputs. The ground truth is computed in full FP32 precision. We compare three approaches:
\begin{itemize}
    \item \textbf{Dequant:} $K, V$ are dequantized to BF16 via per-channel scale; $P = \text{softmax}(S)$ is cast to BF16 before the $PV$ GEMM. The softmax itself runs in FP32 on Vector cores.
    \item \textbf{Flash:} Block-wise FlashAttention with online softmax; same BF16 dequantization as above, but computed in tiles.
    \item \textbf{Flash+MSD:} $Q$ and $P$ are decomposed via MSD into INT8 components; $K, V$ remain in INT8 throughout. All GEMMs are INT8$\times$INT8.
\end{itemize}

Figure~\ref{fig:flash-accuracy} shows the results across sequence lengths from 64 to 16384. MSD does not degrade accuracy compared to dequantization-based methods, and achieves $\sim$3$\times$ lower L2 error, with the advantage maintained across all sequence lengths and block sizes.

\begin{figure}[h]
\centering
\includegraphics[width=\textwidth]{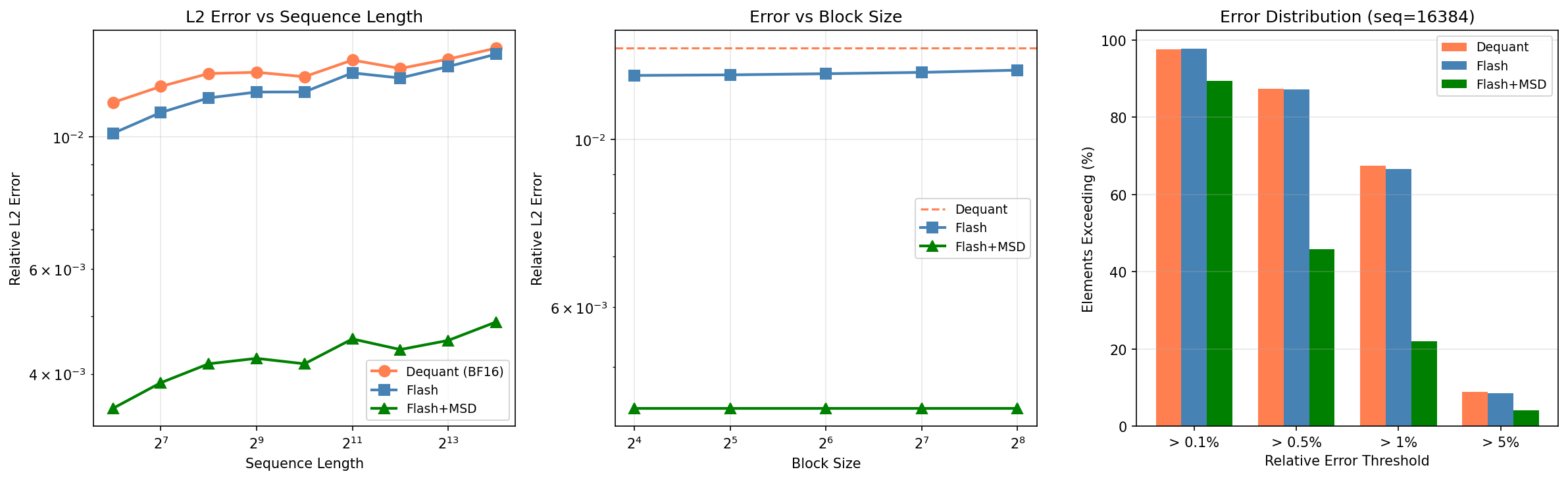}
\caption{Flash Attention accuracy: (a) L2 error vs sequence length, (b) L2 error vs block size, (c) error distribution at seq=16384. MSD achieves $\sim$0.5\% L2 error vs $\sim$1.4\% for Dequant.}
\label{fig:flash-accuracy}
\end{figure}

Table~\ref{tab:flash-error-dist} details the error distribution at sequence length 16384.

\begin{table}[h]
\centering
\caption{Flash Attention error distribution (seq=16384, head\_dim=64, INT8 KV with per-channel scale)}
\label{tab:flash-error-dist}
\begin{tabular}{lccccc}
\toprule
Method & L2 Rel. Error & $>0.1\%$ & $>0.5\%$ & $>1\%$ & $>5\%$ \\
\midrule
Dequant (BF16) & $1.41\%$ & $97.7\%$ & $87.4\%$ & $67.5\%$ & $8.9\%$ \\
Flash (BF16) & $1.38\%$ & $97.7\%$ & $87.2\%$ & $66.7\%$ & $8.6\%$ \\
\textbf{Flash+MSD} & $\mathbf{0.49\%}$ & $\mathbf{89.4\%}$ & $\mathbf{45.9\%}$ & $\mathbf{22.1\%}$ & $\mathbf{4.1\%}$ \\
\bottomrule
\end{tabular}
\end{table}

The BF16 truncation of $P$ before the $PV$ GEMM is the dominant error source in dequantization-based approaches. MSD avoids this by decomposing $P$ into two INT8 components, preserving more precision.

\subsection{MXFP4 Decomposition Accuracy}
\label{sec:exp-mxfp4}

We evaluate the MXFP4 instantiation of MSD (Section~\ref{sec:mxfp4-instantiation}) for the W4A16 scenario, where the baseline for activation quantization is single-pass MXFP8 (E4M3 with E8M0 shared scale, $\sim$5.24 effective bits). The reference GEMM result is $X_{\text{fp32}} \cdot W_{\text{mx4}}$ (dequantized MXFP4 weights in FP32), and we measure the error introduced by \emph{activation-side} quantization.

\subsubsection{Activation Decomposition Accuracy vs.\ Distribution}

Table~\ref{tab:mxfp4-decomp} compares the per-vector decomposition accuracy of MSD-MXFP4 against single-pass MXFP8 across diverse activation distributions. Effective bits are computed as $-\log_2(\|x - x_{\text{quant}}\|_2 / \|x\|_2)$.

\begin{table}[h]
\centering
\caption{Activation decomposition accuracy: MSD-MXFP4 vs. MXFP8 (2048$\times$2048 matrices, per 32-element block).}
\label{tab:mxfp4-decomp} 
\setlength{\tabcolsep}{5pt} 
\begin{tabular}{l ccccc}
\toprule
Distribution & \makecell{MSD-opt \\ L2} & \makecell{MSD-opt \\ Eff. Bits} & \makecell{MXFP8 \\ L2} & \makecell{MXFP8 \\ Eff. Bits} & \makecell{MSD / \\ MXFP8} \\
\midrule
$\mathcal{N}(0, 0.1)$ & 0.0103 & 6.60 & 0.0265 & 5.24 & $2.57\times$ \\
$\mathcal{N}(0, 1.0)$ & 0.0102 & 6.62 & 0.0265 & 5.24 & $2.61\times$ \\
$\mathcal{U}(-1, 1)$  & 0.0088 & 6.83 & 0.0236 & 5.40 & $2.68\times$ \\
$\mathcal{U}(-3, 3)$  & 0.0061 & 7.36 & 0.0273 & 5.20 & $4.47\times$ \\
Lap$(0, 1.0)$         & 0.0125 & 6.32 & 0.0265 & 5.24 & $2.11\times$ \\
$t(\text{df}=3)$      & 0.0151 & 6.05 & 0.0264 & 5.24 & $1.75\times$ \\
$t(\text{df}=1)$ Cauchy & 0.0095 & 6.84 & 0.0251 & 5.47 & $2.64\times$ \\
\bottomrule
\end{tabular}
\end{table}

MSD-MXFP4 achieves lower L2 error than single-pass MXFP8 across all distributions. Uniform distributions show the largest advantage ($4.47\times$), as values evenly fill the quantization grid. Heavy-tailed distributions (Laplacian, Student-$t$) show the smallest but still substantial advantage ($1.75$--$2.1\times$). Effective bits range from 6.0 to 7.4 for MSD-MXFP4, compared to 5.2--5.5 for MXFP8.

\subsubsection{GEMM Accuracy vs.\ Distribution}

Table~\ref{tab:mxfp4-gemm} evaluates the end-to-end GEMM accuracy, measuring how activation-side quantization error propagates through matrix multiplication.

\begin{table}[h]
\centering
\caption{GEMM accuracy: MSD-MXFP4 vs.\ MXFP8 (2048$\times$2048, various activation distributions).}
\label{tab:mxfp4-gemm}
\begin{tabular}{lccccc}
\toprule
Distribution & MSD-opt L2 & $>5\%$ & MXFP8 L2 & $>5\%$ & MSD/MXFP8 \\
\midrule
$\mathcal{N}(0, 0.5)$ & 0.0109 & 13.2\% & 0.0266 & 31.1\% & $2.44\times$ \\
$\mathcal{U}(-1, 1)$ & 0.0095 & 11.4\% & 0.0235 & 27.6\% & $2.48\times$ \\
$\mathcal{U}(-3, 3)$ & 0.0074 & 8.5\% & 0.0272 & 31.7\% & $3.67\times$ \\
Lap$(0, 1.0)$ & 0.0132 & 16.1\% & 0.0266 & 30.9\% & $2.02\times$ \\
$t(\text{df}=3)$ & 0.0156 & 19.3\% & 0.0262 & 30.4\% & $1.68\times$ \\
\bottomrule
\end{tabular}
\end{table}

MSD-MXFP4's $>5\%$ error element fraction (8.5\%--19.3\%) is well below MXFP8's (27.6\%--31.7\%). The advantage is stable across different variance levels---block-level scaling in the MX specification adapts $\alpha$ per block, making the decomposition quality independent of global variance.

\subsubsection{GEMM Accuracy vs.\ Matrix Size}

Table~\ref{tab:mxfp4-size} verifies that the MSD-MXFP4 advantage holds across matrix dimensions.

\begin{table}[h]
\centering
\caption{GEMM accuracy vs.\ matrix size ($\mathcal{N}(0, 0.5)$ activation, MXFP4 weight).}
\label{tab:mxfp4-size}
\begin{tabular}{lccccc}
\toprule
Size & MSD-opt L2 & $>5\%$ & MXFP8 L2 & $>5\%$ & MSD/MXFP8 \\
\midrule
$256^2$ & 0.0108 & 13.3\% & 0.0264 & 30.8\% & $2.45\times$ \\
$512^2$ & 0.0110 & 13.4\% & 0.0265 & 30.6\% & $2.41\times$ \\
$1024^2$ & 0.0109 & 13.2\% & 0.0266 & 31.0\% & $2.45\times$ \\
$2048^2$ & 0.0109 & 13.2\% & 0.0266 & 31.0\% & $2.44\times$ \\
$4096^2$ & 0.0109 & 13.3\% & 0.0266 & 31.1\% & $2.43\times$ \\
\bottomrule
\end{tabular}
\end{table}

The improvement factor is stable at $\sim 2.4\times$ across all sizes from $256 \times 256$ to $4096 \times 4096$, confirming that the per-block scaling mechanism makes MSD-MXFP4's advantage dimension-independent.

\subsubsection{Error Bound Verification}

We verify that the per-element reconstruction error never exceeds the theoretical bound $\alpha/64$ (Theorem~\ref{thm:error-mxfp4}). Table~\ref{tab:mxfp4-bound} reports the maximum observed error normalized by $\alpha/64$.

\begin{table}[h]
\centering
\caption{Error bound verification: max observed error / ($\alpha/64$) across distributions.}
\label{tab:mxfp4-bound}
\begin{tabular}{lccc}
\toprule
Distribution & max err/($\alpha/64$) & Pass 2 clip rate & Eff.\ Bits \\
\midrule
$\mathcal{N}(0, 0.5)$ & 0.9994 & 12.97\% & 6.70 \\
$\mathcal{N}(0, 1.0)$ & 0.9996 & 12.57\% & 6.74 \\
$\mathcal{U}(-1, 1)$ & 0.9999 & 12.72\% & 6.43 \\
$\mathcal{U}(-3, 3)$ & 1.0000 & 12.18\% & 6.91 \\
Lap$(0, 1.0)$ & 0.9997 & 12.10\% & 6.78 \\
$t(\text{df}=3)$ & 0.9990 & 12.73\% & 6.81 \\
$t(\text{df}=1)$ Cauchy & 0.9995 & 10.84\% & 6.76 \\
\bottomrule
\end{tabular}
\end{table}

No violations of the $\alpha/64$ bound are observed across any distribution. The maximum ratio reaches 1.0000, confirming the bound is tight. The Pass 2 clip rate is consistently near the theoretical 12.5\%, and effective bits are stable at 6.4--6.9 across distributions.

\subsubsection{Configuration Evolution}

Table~\ref{tab:mxfp4-evolution} (Section~\ref{sec:optimization}) shows the progressive accuracy improvement from three MSD-MXFP4 design iterations. In the experiments, the v3 configuration achieves $2.6\times$ lower GEMM L2 error than single-pass MXFP8, confirming that both the $\beta$ refinement and $\alpha$ relaxation contribute meaningfully to the final result.

\subsubsection{Tradeoff Analysis}

Table~\ref{tab:mxfp4-tradeoff} summarizes the tradeoffs between MSD-MXFP4 and the MXFP8 baseline.

\begin{table}[h]
\centering
\caption{MSD-MXFP4 vs.\ MXFP8 tradeoff summary.}
\label{tab:mxfp4-tradeoff}
\begin{tabular}{lcc}
\toprule
Dimension & MSD-MXFP4 & MXFP8 \\
\midrule
Storage (per element) & 8.5 bits (2$\times$FP4 + 2$\times$E8M0) & 8.25 bits (1$\times$FP8 + 1$\times$E8M0) \\
Effective bits & 6.0--7.4 & 5.2--5.5 \\
GEMM compute & 2$\times$FP4 GEMM + 1 add & 1$\times$FP8 GEMM \\
GEMM L2 error & $\sim$0.011 & $\sim$0.026 \\
Pass 2 scale & $\beta = \alpha/16$ (no max) & N/A \\
Error bound & $\leq \alpha/64$ (provable) & No tight bound \\
Eff.\ compute time & Same (2$\times$FP4 at 4$\times$ = 1$\times$FP8 at 2$\times$) & --- \\
\bottomrule
\end{tabular}
\end{table}

The core tradeoff is $+$3\% storage and $+$1 GEMM for $+$1.4 effective bits, 2.0--3.7$\times$ lower GEMM L2 error, and a provable per-block error bound. The hardware advantage of $\beta = \alpha/16$ is that Pass 2's scale requires no max-reduction over the 32-element block---a simple right-shift by 4 bits suffices.

\section{Discussion}
\label{sec:discussion}

\subsection{Trade-offs and Limitations}

MSD trades increased GEMM computation for removed dequantization from the GEMM critical path. This trade-off is favorable when:
\begin{enumerate}
    \item The accelerator's GEMM throughput substantially exceeds its dequantization throughput (true for Ascend, Hopper, and most modern NPUs/GPUs)
    \item The workload is memory-bandwidth-bound or latency-sensitive (typical LLM decode phase)
\end{enumerate}

\textbf{Decode vs. Prefill.} As analyzed in Sections~\ref{sec:method} and~\ref{sec:attention}, MSD is optimized for the \textbf{Decode phase}. In decode, the query count per KV head is $N = (1 + N_{\text{spec}}) \times G$---typically small and independent of the system batch size. The dequant baseline must convert the entire KV cache from INT8 to BF16 on Vector cores ($O(Md)$ ops), while MSD replaces this with N-dependent decomposition and merging costs ($O(NM + NdM/B_c)$). For typical $N \leq 12$, MSD achieves 2--20$\times$ Vector reduction (Table~\ref{tab:decode-vector-examples}). As $N$ grows due to speculative decoding, MTP, or MLA, MSD's Vector advantage narrows (crossover at $N^* \approx 32$), but Cube-side INT8 throughput and precision benefits persist.

In Prefill phase with large $N$, the attention is compute-bound and the 2$\times$ GEMM overhead from MSD outweighs the dequantization savings. MSD is therefore not recommended for Prefill-dominant workloads.

\textbf{Scope of current validation.} The experiments in this paper include operator-level numerical accuracy simulations for both INT8 and MXFP4 decompositions, as well as GEMM/attention kernel evaluations. MSD has been deployed in Huawei CANN 8.0 and validated in production inference workloads on Ascend 910B, achieving significant performance improvements in decode-phase latency. End-to-end model evaluation results (perplexity, downstream task accuracy) and detailed hardware profiling data will be reported separately. The primary claim of this paper is that MSD removes weight/KV dequantization from the GEMM critical path without degrading accuracy; the observed operator-level error reduction (e.g., 200$\times$ lower L2 error for INT8 GEMM, 2.0--3.7$\times$ for MXFP4 GEMM) is a secondary observation.

\subsection{MSD-MXFP4: Clipping vs.\ Zero-Clipping Trade-off}

A fundamental design difference between the INT8 and MXFP4 instantiations of MSD is how they handle out-of-range values:

\begin{itemize}
    \item \textbf{INT8 MSD} achieves zero-clipping: both quantization passes stay within the INT8 range $[-127, 127]$, yielding a per-vector error bound $M/64516$.
    \item \textbf{MXFP4 MSD} deliberately accepts $\sim$12.5\% clipping in Pass 2, yielding a per-block error bound $\alpha/64$.
\end{itemize}

This is an intentional trade-off: allowing 12.5\% of residual elements to be clipped enables $\beta = \alpha/16$ instead of $\alpha/8$, halving the Pass 2 quantization step for the 87.5\% of elements that are normally quantized. The net effect is +1.4 effective bits over the no-clipping variant (6.65 vs.\ 5.79 bits, Table~\ref{tab:mxfp4-evolution}). The per-block error bound $\alpha/64$ is provably tight (Table~\ref{tab:mxfp4-bound}), with zero observed violations.

\textbf{Per-block vs.\ per-vector error bound.} The INT8 variant provides a \emph{per-vector} error bound ($M/64516$ where $M$ is the vector maximum), while the MXFP4 variant provides a \emph{per-block} error bound ($\alpha/64$ where $\alpha$ is the block's E8M0 scale). The effective bits of the MXFP4 variant depend on the ratio $\alpha / M_b$, which varies across blocks. In practice, this ratio is stable (Table~\ref{tab:mxfp4-bound}: 6.4--6.9 bits across distributions), but the bound is inherently coarser than the INT8 variant's global guarantee.

\textbf{Storage overhead.} MSD-MXFP4 stores 2$\times$FP4 + 2$\times$E8M0 per 32-element block = 8.5 bits/element, compared to MXFP8's 1$\times$FP8 + 1$\times$E8M0 = 8.25 bits/element. The +3\% storage overhead is modest relative to the +1.4 effective bits and provable error bound.

\textbf{Hardware advantage of $\beta = \alpha/16$.} Since $\beta$ is computed from $\alpha$ via division by $2^4$ (right-shift by 4 bits), no max-reduction over the 32-element block is needed for Pass 2's scale. This eliminates a cross-element reduction operation from the decomposition pipeline, simplifying hardware implementation.

\subsection{Deployment Scope and Operator Coverage}

MSD is not intended to replace every GEMM in an LLM inference engine. It is enabled selectively for decode-phase operators where dequantization or redundant HBM traffic is on the critical path. Table~\ref{tab:operator-coverage} summarizes the deployment scope.


\begin{table}[htbp]
\centering
\small
\caption{Operator coverage and recommended deployment policy for MSD.}
\label{tab:operator-coverage}
\begin{tabularx}{\linewidth}{@{} l >{\raggedright\arraybackslash}X >{\raggedright\arraybackslash}X @{}}
\toprule
\textbf{Operator / Path} & \textbf{Typical Regime} & \textbf{Rationale} \\
\midrule
\multicolumn{3}{@{}l}{\textbf{\textit{Strong Fit (Memory-Bound / Small Tile)}}} \\
\midrule
GQA decode QK/PV & $N \ll M$, $d{=}128$ & Small KV tiles remain resident \\
MLA latent attention & KV rank 512 + RoPE 64 & Latent KV tiles fit on chip \\
INT8 KV FlashAttn & Long context, small $N$ & Removes KV dequant round-trip \\
\midrule
\multicolumn{3}{@{}l}{\textbf{\textit{Conditional Fit (Requires Specific Fusions)}}} \\
\midrule
Dense linear decode & $b \ll m, n$ (small batch) & Needs resident weight-tile reuse \\
MoE expert GMM & Small grouped GEMMs & Depends on grouped scheduling \\
Large MLP proj. & Large $d, m$ & Needs fused tiled kernel \\
\midrule
\multicolumn{3}{@{}l}{\textbf{\textit{Weak Fit (Compute-Bound)}}} \\
\midrule
Prefill attention & Large $N$ & Extra MSD GEMM dominates \\
Large-batch GEMM & Large token batch & Dequantization cost amortized \\
\bottomrule
\end{tabularx}
\end{table}

The strongest cases are GQA/MLA attention with long KV cache and small query count, where KV tiles are small enough to remain resident across both MSD passes. Linear and MoE GMM kernels benefit when weight tiles can be reused inside a fused tiled kernel (Section~\ref{sec:fused-kernel}). In contrast, prefill attention and large-batch GEMMs are often compute-bound, so the runtime should fall back to conventional low-precision kernels.

\subsection{Generalization to Other Hardware}

The MSD principle is hardware-agnostic. Any accelerator with:
\begin{itemize}
    \item Native low-precision GEMM support (INT8$\times$INT8, FP4$\times$FP4, FP8$\times$FP8, etc.)
    \item Asymmetric throughput between GEMM and dequantization units
\end{itemize}
can potentially benefit from MSD. On NVIDIA GPUs with Tensor Cores, the same activation decomposition can be applied using INT8 or FP8 GEMM primitives.

\subsection{Extension to MoE and Sparse Architectures}

Mixture-of-Experts (MoE) models use Grouped Matrix Multiplication (GMM) extensively. Since MSD operates at the granularity of individual activation vectors, it extends naturally to GMM without modification.

\subsection{Future Work}

Several directions warrant further investigation:
\begin{itemize}
    \item \textbf{End-to-end model evaluation:} Perplexity and downstream task accuracy measurements on representative LLMs (LLaMA, DeepSeek, etc.)
    \item \textbf{Dynamic scale selection:} Adaptive choice of $K$ based on activation statistics
    \item \textbf{Hardware co-design:} Custom instructions for faster decomposition/reconstruction
    \item \textbf{Training-aware MSD:} Joint optimization of decomposition parameters during fine-tuning
\end{itemize}
\section{Conclusion}
\label{sec:conclusion}

We have presented Multi-Scale Dequant (MSD), a quantization framework that removes weight/KV dequantization from the GEMM critical path in LLM inference through multi-scale activation decomposition. By representing high-precision BF16 activations as a weighted sum of low-precision components, MSD enables fully native low-precision GEMM execution on hardware tensor cores without INT8-to-BF16 weight conversion before GEMM.

We instantiate MSD for two weight formats and derive tight error bounds for each:
\begin{itemize}
    \item \textbf{INT8 (W8A16):} Two-pass decomposition achieves $\sim$16 effective bits with error bound $M/64516 \approx M/2^{16}$ (Theorem~\ref{thm:error}). An ablation study confirms that the second residual pass is the key mechanism---single-scale ($K=1$) quantization yields L2 errors comparable to the BF16 dequant baseline, while adding the second pass ($K=2$) reduces error by $\sim$200$\times$.
    \item \textbf{MXFP4 (W4A16):} Two-pass decomposition achieves $\sim$6.6 effective bits with error bound $\alpha/64$ per 32-element block (Theorem~\ref{thm:error-mxfp4}), surpassing single-pass MXFP8's $\sim$5.24 bits by 1.4 effective bits. GEMM L2 error is 2.0--3.7$\times$ lower than MXFP8 across diverse activation distributions (Section~\ref{sec:exp-mxfp4}), with zero observed violations of the error bound.
\end{itemize}

For both formats, the effective Cube compute time is comparable to the dequantization baseline---MSD-INT8's $4mn$ INT8 FLOPs at $2\times$ throughput equals $2mn$ BF16 FLOPs, and MSD-MXFP4's two FP4 GEMMs at $4\times$ throughput equal one FP8 GEMM at $2\times$ throughput. We further derive closed-form models showing that MSD eliminates the Vector-Cube pipeline stall inherent in dequantization-based approaches. For Flash Attention, $\|P\|_\infty = 1$ from softmax normalization makes P's decomposition scale a constant ($\alpha_P = 1/127$ for INT8, $\alpha_P = 1$ for MXFP4), requiring no additional max computation. In the GQA decode regime, MSD reduces Vector workload by $2$--$20\times$ for typical query counts ($N \leq 12$), with the crossover point extended by larger head dimensions in MLA-style architectures ($N^* \approx 30$ for $d=576$).

We believe the principle of shifting decomposition from weights to activations represents a promising direction for efficient LLM inference, with broad applicability across accelerator architectures, precision formats, and model families. MSD has been integrated into Huawei CANN 8.0 and validated in production inference scenarios on Ascend 910B. Detailed end-to-end model evaluation results (perplexity, downstream task accuracy) will be reported separately.

\section*{Code Availability}
The MSD technique is used in multiple operator kernels within the CANN ops-transformer repository (\url{https://gitcode.com/cann/ops-transformer}). Two representative examples are: the attention kernel (\url{https://gitcode.com/cann/ops-transformer/blob/master/attention/incre_flash_attention/op_kernel/arch32/incre_flash_attention_preload_dd.h}) and the grouped matmul A16W4 kernel (\url{https://gitcode.com/cann/ops-transformer/tree/master/gmm/grouped_matmul/op_kernel/a16w4_msd}).

\end{document}